\crefname{section}{§}{§§}
\Crefname{section}{§}{§§}
\theoremstyle{plain}
\newtheorem{theorem}{Theorem}
\newtheorem{proposition}[theorem]{Proposition}
\newtheorem{lemma}[theorem]{Lemma}
\theoremstyle{definition}
\theoremstyle{remark}
\definecolor{strings}{rgb}{.824,.251,.259}
\definecolor{keywords}{rgb}{.224,.451,.686}
\definecolor{comment}{rgb}{.322,.451,.322}
\definecolor{lightcyan}{rgb}{0.88,1,1}
\definecolor{solidago}{RGB}{245,245,220}
\newcommand{\R}{\mathbb{R}}
\newcommand{\mathbold}[1]{\ensuremath{\boldsymbol{\mathbf{#1}}}}
\newcommand{\mcN}{\mathcal{N}}
\newcommand{\mcP}{\mathcal{P}}
\newcommand{\mbk}{\mathbold{k}}
\newcommand{\mbq}{\mathbold{q}}
\newcommand{\mbv}{\mathbold{v}}
\newcommand{\mbx}{\mathbold{x}}
\newcommand{\mby}{\mathbold{y}}
\newcommand{\mbK}{\mathbold{K}}
\newcommand{\mbM}{\mathbold{M}}
\newcommand{\mbQ}{\mathbold{Q}}
\newcommand{\mbS}{\mathbold{S}}
\newcommand{\mbV}{\mathbold{V}}
\newcommand{\mbY}{\mathbold{Y}}
\newcommand{\mbbeta}{\mathbold{\beta}}
\newcommand{\mbmu}{\mathbold{\mu}}
\newcommand{\mbomega}{\mathbold{\omega}}
\newcommand{\mbphi}{\mathbold{\phi}}
\newcommand{\indicator}{\mathbbm{1}}
\newcommand\numberthis{\addtocounter{equation}{1}\tag{\theequation}}
\DeclareMathOperator*{\argmax}{arg\,max}
\newcommand{\norm}[1]{\left\lVert#1\right\rVert}
\newcommand{\var}[2][]{%
   \ifthenelse{ \equal{#2}{} }
      {\ensuremath{\operatorname{Var}\left[#1\right]}}
      {\ensuremath{\operatorname{Var}_{#1}\left[#2\right]}}
}
\newcommand{\cov}[3][]{%
   \ifthenelse{ \equal{#3}{} }
      {\ensuremath{\operatorname{Cov}\left[#1,#2\right]}}
      {\ensuremath{\operatorname{Cov}_{#1}\left[#2,#3\right]}}
}
\newcommand{\E}[2][]{%
   \ifthenelse{ \equal{#2}{} }
      {\ensuremath{\mathbb{E}\left[#1\right]}}
      {\ensuremath{\mathbb{E}_{#1}\left[#2\right]}}
}
\newcommand{\trace}[2][]{%
   \ifthenelse{ \equal{#2}{} }
      {\ensuremath{\operatorname{Tr}\left(#1\right)}}
      {\ensuremath{\operatorname{Tr}_{#1}\left(#2\right)}}
}
\newenvironment{itemizesquish}{\begin{itemize}[leftmargin=*,noitemsep,topsep=0pt]}{\end{itemize}}
\newcommand{\imagenet}{\texttt{ImageNet1k}\xspace}
\newcommand{\book}{\texttt{Books3}\xspace}
\newcommand{\wmt}{\texttt{WMT14\,En-De}\xspace}
\newcommand{\lra}{\texttt{LRA}\xspace}
\newcommand{\wikitext}{\texttt{Wikitext-103}\xspace}
\newcommand{\model}{\textsc{eva}\xspace}
\title{Efficient Attention via Control Variates} %
\author{%
Lin Zheng\textsuperscript{1}\thanks{The majority of this work was done while these authors were at Bytedance.} \quad Jianbo Yuan\textsuperscript{2} \quad Chong Wang\textsuperscript{3}$^{*}$ \quad Lingpeng Kong\textsuperscript{1}\\
\textbf{\textsuperscript{1}}The University of Hong Kong \quad \textbf{\textsuperscript{2}}ByteDance Inc. \quad \textbf{\textsuperscript{3}}Apple Inc. \\
\texttt{\{lzheng2,lpk\}@cs.hku.hk} \\
\texttt{jianbo.yuan@bytedance.com} \quad \texttt{mr.chongwang@apple.com}
}
\begin{document}

\etocdepthtag.toc{main}
\etocsettagdepth{main}{none}
\etocsettagdepth{appendix}{none}

\maketitle

\begin{abstract}
Random-feature-based attention (RFA) is an efficient approximation of softmax attention with linear runtime and space complexity. However, the approximation gap between RFA and conventional softmax attention is not well studied. Built upon previous progress of RFA, we characterize this gap through the lens of control variates and show that RFA can be decomposed into a sum of multiple control variate estimators for each element in the sequence. This new framework reveals that exact softmax attention can be recovered from RFA by manipulating each control variate. Besides, it allows us to develop a more flexible form of control variates, resulting in a novel attention mechanism that significantly reduces the approximation gap while maintaining linear complexity. Extensive experiments demonstrate that our model outperforms state-of-the-art efficient attention mechanisms on both vision and language tasks.\footnote{Our code and models are available at \href{https://github.com/hkunlp/efficient-attention}{\color{brown} this link}.}
\end{abstract}

\section{Introduction}\label{sec:intro}
Random-feature-based attention \citep[RFA, also known as Performer;][]{choromanski2021rethinking,peng2021rfa} is an established fast approximation to the conventional softmax attention mechanism \citep{bahdanau2014neural,vaswani2017attention}, which successfully scales Transformer models to processing much longer sequences \citep{choromanski2021rethinking}. At its core is the usage of random features \citep[RF;][]{random-features} to linearize the exponential kernel in softmax attention, which reduces the computational cost from quadratic to linear runtime and space complexity. Despite its efficiency, recent studies have pointed out that such approximation suffers from substantial performance degeneration \citep{xiong2021simple,lara}. 

In this work, we generalize the formulation of RFA via control variates \citep{mcbook}, which characterizes the approximation gap between RFA and softmax attention in theory. We first show that RFA can be decomposed from a \emph{global} approximation over the whole sequence into a sum of \emph{local} control variate estimators, each of which is applied to an individual element in the sequence. Under this formulation, RFA is equivalent to employing the same coefficient for all control variate estimators to scale their variance isotropically (\cref{ssec:rfa-and-cv}). Besides, we prove that if we optimize the coefficient of each control variate to minimize the estimation variance individually, RFA estimation becomes exact, that is, softmax attention is recovered with zero bias and zero variance (\cref{ssec:rfa-and-softmax}).

Our key observation is that such formulation reveals a localized perspective of the RFA approximation. Instead of directly seeking a better estimate over the entire sequence, we can break down the problem into smaller problems that aim at improving the approximation for each subsequence (\cref{sec:eva}). The control variate estimator for each subsequence can be tuned separately and combined to yield better estimation, which provably reduces approximation error in the global sense (\cref{ssec:cv-partially-shared}). Nevertheless, one caveat is that as the number of sub-problems increases, the approximation gap will be reduced but at the expense of higher computational complexity. For instance, if we optimize the control variate for every single element, softmax attention would be recovered as desired but with quadratic complexity.
To attain a good trade-off between approximation quality and efficiency, we develop a new \textbf{E}fficient attention via control \textbf{VA}riates (\textbf{EVA}) that implements this divide-and-conquer strategy efficiently. In \model, the sequence is partitioned into a fixed number of disjoint subsets. For the subset that might bear the highest correlations to the query, we explicitly optimize the control variate for each element, which recovers exact softmax attention probabilities; while for the others, the control variate coefficient is shared locally among all elements within the same subset. The resulting attention mechanism is not only highly effective but also runs with the same computational complexity as RFA (\cref{ssec:practical-eva}). Extensive experiments on both language and vision tasks demonstrate that \model outperforms the state-of-the-art efficient attention methods (\cref{sec:experiments}).

\section{Background}\label{sec:background}
\subsection{Softmax Attention Mechanism}\label{ssec:attention}
Assume there exist a set of $N$ queries $\{\mbq_n\}_{n=1}^N$ and $M$ key-value pairs $\mbK = [\mbk_1, \dots, \mbk_M]$ and $\mbV = [\mbv_1, \dots, \mbv_M]$, where queries, keys and values are all $d$-dimensional vectors. The softmax attention mechanism \citep{bahdanau2014neural,vaswani2017attention} is defined as an average over the value vectors weighted by the dot-product similarities of the queries and keys. For the $n$-th query, the attention mechanism outputs%
\begin{equation}
    \operatorname{SoftmaxAttn}(\mbq_n, \mbK, \mbV) \coloneqq \sum_{m=1}^M \frac{\exp\left(\mbq_n^\top \mbk_m\right)}{\sum_{m'=1}^M \exp\left(\mbq_n^\top \mbk_{m'}\right)} \mbv_m.\label{eqn:softmax-attn}
\end{equation}
In the case of self-attention \citep{lin2017structured, vaswani2017attention}, we have $M = N$, which results in quadratic computational complexity since we have to compute the similarity for each query-key pair explicitly.

\subsection{Random-Feature-based Attention with Self-normalized Importance Sampling} \label{ssec:rfa-and-snis}
Recently, \citet{lara} identifies that softmax attention (\cref{eqn:softmax-attn}) can be written as an expectation over an attention-like aggregating function,%
\begin{align*}
\operatorname{SoftmaxAttn}(\mbq_n, \mbK, \mbV) = \sum_{m=1}^M \frac{\exp\left(\mbq_n^\top \mbk_m\right)}{\sum_{m'=1}^M \exp\left(\mbq_n^\top \mbk_{m'}\right)} \mbv_m = \E[\omega \sim p_n(\omega)]{f_{n}(\omega)}, \numberthis\label{eqn:softmax-as-expectation}
\end{align*}
where
\begin{align*}
f_n(\omega) \coloneqq \frac{\sum_{m=1}^M\xi(\mbq_n, \omega)\xi(\mbk_m, \omega)\mbv_m}{\sum_{m'=1}^M\xi(\mbq_n, \omega)\xi(\mbk_{m'}, \omega)}, \quad p_n(\omega) &\coloneqq \frac{\mathcal{N}(\omega;0,\mathbf{I}) \sum_{m=1}^M\xi(\mbq_n,\omega)^\top  \xi(\mbk_{m}, \omega)}{Z}.\numberthis\label{eqn:ra-p-and-f}
\end{align*}
Here $\xi(\cdot,\cdot)$ is the \emph{randomized mapping} defined in such a way that $\exp\left(\mbq_n^\top \mbk_m\right) = \E[\omega \sim \mcN(0,\mathbf{I})]{\xi(\mbq_n, \omega)^\top \xi(\mbk_m, \omega)}$, and $Z = \sum_{m=1}^M \exp\left(\mbq_n^\top\mbk_{m}\right)$ denotes the normalizing constant of distribution $p_n$. Throughout this paper, we consider the positive randomized mapping $\xi(\mbx,\omega) = \exp{\left(\omega^\top \mbx - \frac{1}{2}\norm{\mbx}^2\right)}$ \citep{choromanski2021rethinking} unless otherwise specified.

Random-Feature-based Attention (RFA) methods \citep{choromanski2021rethinking,peng2021rfa} can be interpreted as performing \emph{self-normalized importance sampling} \citep[SNIS;][]{hesterberg1995snis} to approximate \cref{eqn:softmax-as-expectation} \citep{lara}. In SNIS, one draws Monte Carlo samples from some \emph{proposal} distribution $q(\omega)$ instead of the true distribution $p_n(\omega)$ and estimates the target expectation as $\E[\omega \sim p_n(\omega)]{f_{n}(\omega)} = \E[\omega \sim q(\omega)]{\frac{p_n(\omega)}{q(\omega)}f_{n}(\omega)} \approx \frac{\sum_{s=1}^S \frac{p_n(\omega)}{q(\omega)} f_{n}(\omega_s)}{\sum_{s=1}^S\frac{p_n(\omega_s)}{q(\omega_s)}}$, where $\omega_1, \dots, \omega_S \sim q(\omega)$. Vanilla RFA amounts to constructing the SNIS estimation with $q(\omega) = \mathcal{N}(\omega;0,\mathbf{I})$. The SNIS representation also turns out equivalent to the more established form of RFA, 
\begin{equation}
     \mathsf{RFA}(\mbq_n, \mbK, \mbV) \coloneqq \frac{\sum_{s=1}^S\frac{p_n(\omega_s)}{q(\omega_s)}f(\omega_s)}{\sum_{s=1}^S\frac{p_n(\omega_s)}{q(\omega_s)}} = \frac{\sum_{m=1}^M\mbphi(\mbq_n, \mbomega)^\top \mbphi(\mbk_m, \mbomega)\mbv_m}{\sum_{m'=1}^M\mbphi(\mbq_n, \mbomega)^\top \mbphi(\mbk_{m'}, \mbomega)}, \label{eqn:rfa-snis}
\end{equation}
where the \emph{random feature}, denoted by $\mbphi(\mbx,\mbomega) \coloneqq  1/\sqrt{S}[\xi(\mbx,\omega_1), \dots, \xi(\mbx, \omega_S)]^\top$, is proposed to approximate exponential kernels in its original motivation (see \cref{app:sec:rfa} for a detailed review).

\subsection{Control Variates}\label{ssec:cv}
Control variates aim to reduce the estimation variance of an expectation $\E{g(\mbomega)}$. Assuming our original RFA estimation is $g(\mbomega) \in \R^d$ and there is some \emph{control variate} $h(\mbomega) \in \R$ with a known expectation $\E{h(\mbomega)}$, we can employ the control variate $h(\mbomega)$ with the coefficient $\mbbeta \in \R^d$ as follows,%
\begin{equation}
    \widetilde{g}(\mbomega) = g(\mbomega) - \mbbeta h(\mbomega) + \mbbeta \E{h(\mbomega)} 
    \label{eqn:basic_cv}
\end{equation}
Note that the resulting estimator remains unbiased since $\E{\widetilde{g}(\mbomega)} = \E{g(\mbomega)} - \mbbeta\E{h(\mbomega)} + \mbbeta\E{h(\mbomega)} = \E{g(\mbomega)}$. However, the estimation variance can be largely reduced if $g(\cdot)$ and the scaled control variate $\mbbeta h(\mbomega)$ are positively correlated \citep{mcbook}.

\section{Dissecting RFA with Control Variates}\label{sec:main}
In this section, we first go through the connections among RFA, importance sampling, and control variates, revealing a decomposed formulation of RFA (\cref{ssec:rfa-and-cv}), and then quantify the approximation gap between RFA and softmax attention (\cref{ssec:rfa-and-softmax}) from these connections.

\subsection{RFA as a Sum of Local Control Variate Estimators}\label{ssec:rfa-and-cv}
As shown in \cref{eqn:rfa-snis}, RFA estimation considers all key-value pairs and produces a \emph{global} approximation over the entire sequence. In contrast, our work develops a \emph{decomposed} representation of RFA based on the recent advances in SNIS \citep{vlassis2021control}, which indicates that an SNIS estimate is asymptotically equivalent to a control variate estimate (the detailed derivations is deferred to \cref{app:ssec:proof-snis-as-cv}). In particular, we have
\begin{align*}
    \frac{\sum_{s=1}^S\frac{p_n(\omega_s)}{q(\omega_s)}f(\omega_s)}{\sum_{s=1}^S\frac{p_n(\omega_s)}{q(\omega_s)}} 
    &= \frac{1}{S}\sum_{s=1}^S\frac{p_n(\omega_s)}{q(\omega_s)}f(\omega_s) - \frac{\sum_{s=1}^S\frac{p_n(\omega_s)}{q(\omega_s)}f(\omega_s)}{\sum_{s=1}^S\frac{p_n(\omega_s)}{q(\omega_s)}} \left(\frac{1}{S}\sum_{s=1}^S\frac{p_n(\omega_s)}{q(\omega_s)} - 1\right) \\
    &\coloneqq g(\mbomega) - \widehat{\mbbeta}(\mbomega) \left(h(\mbomega) - \E{h(\mbomega)}\right) \coloneqq \widetilde{g}(\mbomega),\numberthis\label{eqn:snis_as_cv}
\end{align*}
where $g(\mbomega) \coloneqq \frac{1}{S}\sum_{s=1}^S\frac{p_n(\omega_s)}{q(\omega_s)}f(\omega_s)$ is our base estimate, $h(\mbomega) \coloneqq \frac{1}{S}\sum_{s=1}^S\frac{p_n(\omega_s)}{q(\omega_s)}$ is the control variate with control coefficient $\widehat{\mbbeta}(\mbomega) \coloneqq \left.\left(\sum_{s=1}^S\frac{p_n(\omega_s)}{q(\omega_s)}f(\omega_s)\right)\middle/\left(\sum_{s=1}^S\frac{p_n(\omega_s)}{q(\omega_s)}\right)\right. = \frac{g(\mbomega)}{h(\mbomega)}$.

We now examine the formulation of $g(\cdot)$ and $h(\cdot)$ in the context of RFA. According to \cref{eqn:ra-p-and-f},
\begin{align*}
g(\mbomega) &= \frac{1}{S}\sum_{s=1}^S\frac{p_n(\omega_s)}{q(\omega_s)}f(\omega_s) = \sum_{s=1}^S\alpha(\omega_s) \sum_{m=1}^M\xi(\mbq_n, \omega_s)\xi(\mbk_m, \omega_s)\mbv_m,  \\
h(\mbomega) &= \frac{1}{S}\sum_{s=1}^S\frac{p_n(\omega_s)}{q(\omega_s)} = \sum_{s=1}^S\alpha(\omega_s)\sum_{m=1}^M\xi(\mbq_n, \omega_s)\xi(\mbk_m, \omega_s),
\end{align*}
where $\alpha(\omega_s) \coloneqq \frac{1}{S} \frac{\mathcal{N}(\omega_s;0,\mathbf{I})}{Zq(\omega_s)}$ collects terms that is constant w.r.t. queries, keys, and values. Our key observation is that by changing the order of summations, both $g(\cdot)$ and $h(\cdot)$ can be decomposed as $g(\mbomega) = \sum_{m=1}^M g_m(\mbomega)$ and $h(\mbomega) = \sum_{m=1}^M h_m(\mbomega)$ respectively, where 
\begin{align*}
    g_m(\mbomega) &= \sum_{s=1}^S\alpha(\omega_s) \xi(\mbq_n, \omega_s)\xi(\mbk_m, \omega_s)\mbv_m, \qquad h_m(\mbomega) = \sum_{s=1}^S\alpha(\omega_s)\xi(\mbq_n, \omega_s)\xi(\mbk_m, \omega_s).
\end{align*}
As a result, we can decompose the entire RFA estimate in \cref{eqn:snis_as_cv} into a summation of $M$ control variate estimates following
\begin{align*}
    \widetilde{g}(\mbomega) &= g(\mbomega) - \widehat{\mbbeta}(\mbomega) \left(h(\mbomega) - \E{h(\mbomega)}\right)\\
    &= \left(\textcolor{keywords}{\sum_{m=1}^M} g_m(\mbomega)\right) - \widehat{\mbbeta}(\mbomega) \left(\left(\textcolor{keywords}{\sum_{m=1}^M} h_m(\mbomega)\right) - \E{\textcolor{keywords}{\sum_{m=1}^M} h_m(\mbomega)}\right)\\
    &= \textcolor{keywords}{\sum_{m=1}^M} g_m(\mbomega) - \widehat{\mbbeta}(\mbomega) \left(h_m(\mbomega) - \E{h_m(\mbomega)}\right) \coloneqq {\sum_{m=1}^M} \widetilde{g}_m(\mbomega).\numberthis\label{eqn:decompose_rfa}
\end{align*}
Here $\widetilde{g}_m(\mbomega) = g_m(\mbomega) - \widehat{\mbbeta}(\mbomega) \left(h_m(\mbomega) - \E{h_m(\mbomega)}\right)$ denotes the corresponding control variate estimator of the $m$-th key-value pair,\footnote{Note that the expectation of individual control variates $h_m(\cdot)$ is still in closed form as $\E{h_m(\mbomega)} = \exp(\mbq_n^\top \mbk_m)/ Z$. The derivation can be found in \cref{app:ssec:expected_h_m}.} and $\widehat{\mbbeta}(\mbomega)$ is the coefficient shared across the entire sequence.

\subsection{Optimizing Coefficients in RFA Locally Recovers Softmax Attention}\label{ssec:rfa-and-softmax}
Based on the decomposition of RFA in \cref{eqn:decompose_rfa}, we have one local control variate attached to each key-value pair. To see the benefit of such decomposition, we demonstrate that softmax attention is equivalent to associating each control variate with a locally optimized coefficient $\widehat{\mbbeta}_m$ in RFA.
\begin{proposition}\label{prop:optimal-beta}
Let $\widetilde{g}_m(\mbomega) = g_m(\mbomega) - {\widehat{\mbbeta}_m} \left(h_m(\mbomega) - \E{h_m(\mbomega)}\right)$. We denote the variance of some estimator $g(\mbomega)$ as $\var{g(\mbomega)} \coloneqq \cov{g(\mbomega)}{g(\mbomega)}$.
Then the optimal $\widehat{\mbbeta}_m$ that minimizes $\trace{\var{\widetilde{g}_m(\mbomega)}}$ (i.e., the sum variance over all dimensions) is of the form
\begin{equation}
    \mbbeta_m^* \coloneqq \arg\min_{\mbbeta} \trace{\var{\widetilde{g}_m(\mbomega)}} = \mbv_m =  \frac{g_m(\mbomega)}{h_m(\mbomega)}.\label{eqn:single-opt-beta}
\end{equation}
Furthermore, by letting $\widehat{\mbbeta}_m = \mbbeta_m^*$ for all $m = 1,2,\dots,M$, we have $\trace{\var{\widetilde{g}_m(\mbomega)}} = 0$. As a result, $\trace{\var{\widetilde{g}(\mbomega)}} = 0$ and thus $\mathsf{RFA}(\mbq_n, \mbK, \mbV) = \widetilde{g}(\mbomega) = \mathsf{SoftmaxAttn}(\mbq_n, \mbK, \mbV)$.
\end{proposition}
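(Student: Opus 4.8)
The plan is to treat each $\widetilde{g}_m(\mbomega)$ as a standard vector-valued control variate estimator and to exploit a special structural relation between $g_m$ and $h_m$. The key observation, read directly off the definitions $g_m(\mbomega) = \sum_{s}\alpha(\omega_s)\xi(\mbq_n,\omega_s)\xi(\mbk_m,\omega_s)\mbv_m$ and $h_m(\mbomega) = \sum_{s}\alpha(\omega_s)\xi(\mbq_n,\omega_s)\xi(\mbk_m,\omega_s)$, is that they share exactly the same random scalar part and differ only by the deterministic factor $\mbv_m$; that is, $g_m(\mbomega) = h_m(\mbomega)\,\mbv_m$. This exact proportionality between the base estimate and its control variate is the crux of the whole argument.

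First I would write $\trace{\var{\widetilde{g}_m(\mbomega)}}$ as a function of $\widehat{\mbbeta}_m$. Expanding $\widetilde{g}_m(\mbomega) - \E{\widetilde{g}_m(\mbomega)} = (g_m(\mbomega) - \E{g_m(\mbomega)}) - \widehat{\mbbeta}_m(h_m(\mbomega) - \E{h_m(\mbomega)})$ and taking the trace of the outer product yields a scalar quadratic in $\widehat{\mbbeta}_m$,
\[
\trace{\var{\widetilde{g}_m(\mbomega)}} = \trace{\var{g_m(\mbomega)}} - 2\,\widehat{\mbbeta}_m^\top \cov{g_m(\mbomega)}{h_m(\mbomega)} + \norm{\widehat{\mbbeta}_m}^2 \var{h_m(\mbomega)}.
\]
Setting the gradient with respect to $\widehat{\mbbeta}_m$ to zero gives the familiar optimal coefficient $\mbbeta_m^* = \cov{g_m(\mbomega)}{h_m(\mbomega)}/\var{h_m(\mbomega)}$; the objective is strictly convex whenever $\var{h_m(\mbomega)}>0$, so this is the unique minimizer. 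Substituting the proportionality $g_m(\mbomega) = h_m(\mbomega)\mbv_m$ gives $\cov{g_m(\mbomega)}{h_m(\mbomega)} = \mbv_m\var{h_m(\mbomega)}$, whence $\mbbeta_m^* = \mbv_m = g_m(\mbomega)/h_m(\mbomega)$, exactly \cref{eqn:single-opt-beta}.

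Next I would substitute $\widehat{\mbbeta}_m = \mbbeta_m^* = \mbv_m$ back into the estimator. Using the proportionality a second time, the two random terms cancel exactly: $\widetilde{g}_m(\mbomega) = h_m(\mbomega)\mbv_m - \mbv_m(h_m(\mbomega) - \E{h_m(\mbomega)}) = \mbv_m\E{h_m(\mbomega)}$, which is deterministic, so $\trace{\var{\widetilde{g}_m(\mbomega)}} = 0$ for every $m$. Summing the now-deterministic local estimators via the decomposition \cref{eqn:decompose_rfa} gives $\widetilde{g}(\mbomega) = \sum_{m=1}^M \mbv_m\E{h_m(\mbomega)}$, so $\trace{\var{\widetilde{g}(\mbomega)}} = 0$ as well. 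Finally, plugging in the closed form $\E{h_m(\mbomega)} = \exp(\mbq_n^\top\mbk_m)/Z$ with $Z = \sum_{m'}\exp(\mbq_n^\top\mbk_{m'})$ recovers $\sum_{m=1}^M \frac{\exp(\mbq_n^\top\mbk_m)}{\sum_{m'}\exp(\mbq_n^\top\mbk_{m'})}\mbv_m = \mathsf{SoftmaxAttn}(\mbq_n,\mbK,\mbV)$, which closes the identification.

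The only genuine step is recognizing the exact proportionality $g_m(\mbomega) = h_m(\mbomega)\mbv_m$; once that is in hand, the optimal-coefficient computation is the textbook scalar control-variate formula applied coordinatewise, and the zero-variance conclusion follows from a one-line cancellation. I anticipate no real obstacle beyond bookkeeping: keeping $\cov{g_m(\mbomega)}{h_m(\mbomega)}$ straight as the $\R^d$-valued covariance of a vector against a scalar, and minimizing over the full vector $\widehat{\mbbeta}_m\in\R^d$ rather than a scalar. Since the trace objective decouples across coordinates, this causes no difficulty.
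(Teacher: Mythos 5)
Your proposal is correct and follows essentially the same route as the paper: it rests on the same key identity $g_m(\mbomega) = h_m(\mbomega)\mbv_m$, applies the standard optimal control-variate coefficient $\cov{g_m(\mbomega)}{h_m(\mbomega)}/\var{h_m(\mbomega)}$ (which the paper derives in \cref{app:ssec:cv-background} and invokes in \cref{app:ssec:proof-prop-1}), and concludes via the same cancellation $\widetilde{g}_m(\mbomega) = \mbv_m\E{h_m(\mbomega)} = \exp(\mbq_n^\top\mbk_m)\mbv_m/Z$. The only cosmetic difference is that you re-derive the optimal-coefficient formula inline rather than citing it as a separate lemma.
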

The proof is deferred to \cref{app:ssec:proof-prop-1}. This proposition implies optimizing $\widehat{\mbbeta}_m$ for each key-value pair in the decomposed formulation of RFA recovers the exact softmax attention. 
It not only characterizes the theoretical gap introduced by RFA but also sheds light on how to improve RFA towards softmax attention from a localized perspective. Furthermore, it delineates the trade-off between estimation quality and computational costs. On the one hand, if we use a distinct $\widehat{\mbbeta}_m$ for each estimator, we could achieve a perfect estimation, albeit at the expense of computing $\exp{\mbq_n^\top \mbk_m}$ for every query-key pair explicitly with quadratic time and space complexity. On the other hand, if a single shared coefficient is employed, it degrades to conventional RFA, where all the control variate estimators can be merged and computed together in linear complexity \citep{choromanski2021rethinking,peng2021rfa,lara}.

\section{EVA: Efficient Attention via Control Variates}\label{sec:eva}
In this section, we demonstrate that the control variate formulation offers a natural way to improve RFA with a finer-grained treatment over control variates. We describe the improved efficient attention mechanism \model in \cref{ssec:cv-partially-shared} and its practical implementation in \cref{ssec:practical-eva}. 

\subsection{Control Variates with Locally Shared Coefficients}
\label{ssec:cv-partially-shared}
We denote $[M] \coloneqq \{1,2,\dots,M\}$ as the set of all key-value indices. Instead of employing the same coefficient for all control variates as in RFA, we propose to partition $[M]$ into $C$ subsets $\mcP_1, \mcP_2, \dots, \mcP_C$ and allocate a \emph{locally} shared $\mbbeta_c$ for each subset $\mcP_c$. 
For all $\mbbeta_c$ and their optimum $\mbbeta^*_m$ for each token, define the weighted mean squared error (weighted MSE) as $\sum_{c=1}^C \sum_{m \in \mcP_c} \alpha_m \norm{\mbbeta_c - \mbbeta_{m}^*}^2$, where $\alpha_m > 0$ and $\sum_{c=1}^C \sum_{m \in \mcP_c} \alpha_m = 1$. To see the benefit of partitioning, we demonstrate that there always exists some $\{\mbbeta_c\}_{c=1}^C$ that achieves lower weighted MSE than any globally shared coefficient (see \cref{app:ssec:analysis-of-partitioning} for a formal argument).
The next question is how to determine $\{\mbbeta_c\}_{c=1}^C$. According to \cref{prop:optimal-beta}, a natural choice is to adapt the optimal coefficients (\cref{eqn:single-opt-beta}) to the case of partitioned subsets.
We justify this choice by proving that it is also optimal in minimizing the MSE above weighted by the true attention probabilities. 
\begin{proposition}\label{prop:finer-grain-advantage}
Suppose $U$ is a set of key-value indices, $\mbbeta^*_m$ is the optimal coefficient for each $m \in U$ as defined in \cref{prop:optimal-beta}, and $\mcP_1, \mcP_2, \dots, \mcP_C$ are an arbitrary partition of $U$, where each subset $\mcP_c$ is associated with a distinct $\mbbeta_c$. We consider the following weighted mean squared error,
\begin{equation}
    J(\mbbeta_1, \dots, \mbbeta_C) \coloneqq \sum_{c=1}^C \sum_{m \in \mcP_c} \frac{\exp\left(\mbq_n^\top \mbk_m\right)}{\sum_{m' \in U} \exp\left(\mbq_n^\top \mbk_{m'}\right)} \norm{\mbbeta_c - \mbbeta_{m}^*}^2.\label{eqn:j-distance}
\end{equation}
Then for each $c = 1,\dots,C$ we have
\begin{equation}
    \mbbeta_c^* \coloneqq \arg\min_{\mbbeta_c} J(\mbbeta_1, \dots, \mbbeta_C) = \frac{\E{\sum_{m \in \mcP_c} g_m(\mbomega)}}{\E{\sum_{m \in \mcP_c} h_m(\mbomega)}}.\label{eqn:partition-opt-beta}
\end{equation}
As a consequence, with $\mbbeta_c = \mbbeta_c^*$, the partition scheme must achieve lower weighted mean squared error than any globally shared $\mbbeta$, that is, $J(\mbbeta_1=\mbbeta_1^*, \dots, \mbbeta_C=\mbbeta_C^*) \leq J(\mbbeta_1=\mbbeta, \dots, \mbbeta_C=\mbbeta)$.
\end{proposition}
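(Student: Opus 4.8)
The plan is to exploit the fact that $J$ separates over the blocks of the partition: since each $\mbbeta_c$ appears only in the $c$-th inner sum, minimizing $J$ jointly over $(\mbbeta_1,\dots,\mbbeta_C)$ is equivalent to minimizing each term
\[
J_c(\mbbeta_c) \coloneqq \sum_{m \in \mcP_c} \frac{\exp\!\left(\mbq_n^\top \mbk_m\right)}{\sum_{m' \in U} \exp\!\left(\mbq_n^\top \mbk_{m'}\right)} \norm{\mbbeta_c - \mbbeta_m^*}^2
\]
independently. First I would observe that each $J_c$ is a strictly convex quadratic in $\mbbeta_c$ (a positively weighted sum of squared Euclidean distances, with Hessian a positive multiple of the identity), so its unique minimizer is obtained by setting the gradient to zero. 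This yields the weighted-centroid formula
\[
\mbbeta_c^* = \frac{\sum_{m \in \mcP_c} \exp\!\left(\mbq_n^\top \mbk_m\right) \mbbeta_m^*}{\sum_{m \in \mcP_c} \exp\!\left(\mbq_n^\top \mbk_m\right)},
\]
where the common normalizer $\sum_{m' \in U}\exp(\mbq_n^\top \mbk_{m'})$ cancels between numerator and denominator.

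Next I would reconcile this weighted-centroid form with the claimed ratio of expectations in \cref{eqn:partition-opt-beta}. The key ingredients are the closed forms of the expected local estimators: following the same computation as the footnoted identity $\E{h_m(\mbomega)} = \exp(\mbq_n^\top \mbk_m)/Z$, an analogous Gaussian-expectation argument gives $\E{g_m(\mbomega)} = (\exp(\mbq_n^\top \mbk_m)/Z)\,\mbv_m$. Since \cref{prop:optimal-beta} identifies $\mbbeta_m^* = \mbv_m$, these combine to $\E{g_m(\mbomega)} = \E{h_m(\mbomega)}\,\mbbeta_m^*$. Substituting into the ratio and using linearity of expectation, the factors of $1/Z$ cancel and the expression collapses exactly to the weighted-centroid formula above, establishing \cref{eqn:partition-opt-beta}.

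For the final inequality I would simply note that a globally shared coefficient is a constrained special case of the partitioned scheme, namely the diagonal choice $\mbbeta_1 = \cdots = \mbbeta_C = \mbbeta$. Because $\{\mbbeta_c^*\}_{c=1}^C$ minimizes $J$ over all unconstrained tuples, it is in particular no worse than any diagonal point, so $J(\mbbeta_1^*,\dots,\mbbeta_C^*) = \min_{\mbbeta_1,\dots,\mbbeta_C} J(\mbbeta_1,\dots,\mbbeta_C) \leq J(\mbbeta,\dots,\mbbeta)$ for every $\mbbeta$, which is precisely the claimed bound.

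The calculations are routine once the separability is noticed; the only step requiring genuine care is the expectation computation for $g_m$, where I must confirm that the value vector $\mbv_m$ factors cleanly out of the Gaussian expectation of $\xi(\mbq_n,\mbomega)\xi(\mbk_m,\mbomega)\mbv_m$ exactly as it does for $h_m$. I expect this to be the main (though minor) obstacle, since it is what links the abstract control-variate quantities to the concrete exponential attention weights and makes the two forms of $\mbbeta_c^*$ coincide.
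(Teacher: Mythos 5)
Your proposal is correct and follows essentially the same route as the paper's proof in \cref{app:ssec:proof-prop-2}: differentiate the separable quadratic to obtain the softmax-weighted centroid $\mbbeta_c^* = \sum_{m\in\mcP_c}\exp(\mbq_n^\top\mbk_m)\mbv_m / \sum_{m\in\mcP_c}\exp(\mbq_n^\top\mbk_m)$, identify it with the ratio of expectations via $\E{h_m(\mbomega)}=\exp(\mbq_n^\top\mbk_m)/Z$, and deduce the inequality because a globally shared $\mbbeta$ is just the diagonal special case of the unconstrained minimization. The one step you flag as potentially delicate—the expectation of $g_m$—is in fact immediate, since $g_m(\mbomega)=h_m(\mbomega)\mbv_m$ holds pointwise ($\mbv_m$ is constant in $\omega$), so $\E{g_m(\mbomega)}=\E{h_m(\mbomega)}\mbv_m$ requires no separate Gaussian computation; the paper additionally includes an orthogonality decomposition quantifying the size of the improvement, but that is not needed for the stated inequality.
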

The proof can be found in \cref{app:ssec:proof-prop-2}. Apart from measuring the squared errors for all coefficients, \cref{eqn:j-distance} also governs the significance of each error by its corresponding softmax weights, which attains closer alignment with true softmax attention. Therefore, this proposition implies that it is much easier for the partitioned control variate estimators to obtain coefficients closer to their optimum while faithfully respecting softmax attention. The optimal coefficients $\mbbeta^*_c$ could be estimated via Monte Carlo samples as $\mbbeta^*_c \approx \widehat{\mbbeta}_c(\mbomega) = \left(\sum_{m \in \mcP_c} g_m(\mbomega)\right)/\left(\sum_{m \in \mcP_c} h_m(\mbomega)\right)$, which is a widely adopted strategy in the control variate literature \citep{chong2013var, mcbook}. The resulting estimator for each subset $\mcP_c$ takes the form
\begin{align*}
    \sum_{m \in \mcP_c} \left(g_m(\mbomega) - \widehat{\mbbeta}_c(\mbomega) h_m(\mbomega) + \widehat{\mbbeta}_c(\mbomega)\frac{\exp(\mbq_n^\top \mbk_m)}{Z}\right) = \frac{\sum_{m \in \mcP_c}\exp(\mbq_n^\top \mbk_m)}{Z}\widehat{\mbbeta}_c(\mbomega).\numberthis\label{eqn:g-m-notin-E}
\end{align*}

\paragraph{Partially Optimized Coefficients.}
Given the optimality of using a separate coefficient for each key-value pair, we could further improve the estimation by selecting some subset $E \subseteq [M]$ and employ $\widehat{\mbbeta}_m = \widehat{\mbbeta}_m^* = \mbv_m$ for each $m \in E$. Without loss of generality, we assume $E \cap \mcP_c = \varnothing$ for all $c = 1,\dots,C$ and $[M] = \left(\bigcup_{c=1}^{C} \mcP_c\right) \cup E$. According to \cref{prop:optimal-beta}, for each $m \in E$ we have%
\begin{equation}
\widetilde{g}_m(\mbomega) = 
g_m(\mbomega) - \widehat{\mbbeta}_m h_m(\mbomega) + \widehat{\mbbeta}_m \frac{\exp(\mbq_n^\top \mbk_m)}{Z} = \frac{\exp(\mbq_n^\top \mbk_m)\mbv_m}{Z}. \label{eqn:g-m-in-E}
\end{equation}
We choose $E$ by running an additional sparse attention mechanism (e.g., local window attention \citep{child2019generating} or Reformer \citep{Kitaev2020reformer}), which tend to select tokens that are more relevant to the query in sub-quadratic complexity. Since estimates on these critical tokens are exact, this strategy not only reduces the overall squared error (\cref{eqn:j-distance}), but also produces a more informative context for queries, which often translates into better empirical performance. Combining \cref{eqn:g-m-in-E,eqn:g-m-notin-E} together, we obtain an improved \textbf{E}fficient attention via control \textbf{VA}riates (\textbf{EVA}),
\begin{align*}
    \mathsf{EVA}(\mbq_n, \mbK, \mbV) &\coloneqq \widetilde{g}(\mbomega) = \sum_{m \in E} \widetilde{g}_m(\mbomega) + \sum_{m \notin E} \widetilde{g}_m(\mbomega) \\
    &= \sum_{m \in E} \frac{\exp(\mbq_n^\top \mbk_m)}{Z}\mbv_m + 
    \sum_{c = 1}^C \frac{\sum_{m \in \mcP_c}\exp(\mbq_n^\top \mbk_m)}{Z}\widehat{\mbbeta}_c(\mbomega).\numberthis\label{eqn:eva}
\end{align*}
\paragraph{Comparison with Vanilla RFA.}
\model and vanilla RFA can be re-written in a similar way (see \cref{app:ssec:proof-rfa-as-correcters} for a detailed derivation),
\begin{align*}
    \mathsf{RFA}(\mbq_n, \mbK, \mbV) &= \frac{\sum_{m=1}^M g_m(\mbomega)}{\sum_{m=1}^M h_m(\mbomega)}, \numberthis\label{eqn:rfa-partition-form} \\
    \mathsf{EVA}(\mbq_n, \mbK, \mbV) &= \sum_{m \in E} \frac{\exp(\mbq_n^\top\mbk_m)}{Z}\frac{g_m(\mbomega)}{h_m(\mbomega)}+ 
    \sum_{c = 1}^C \frac{\sum_{m \in \mcP_c}\exp(\mbq_n^\top \mbk_m)}{Z}\frac{\sum_{m \in \mcP_c} g_m(\mbomega)}{\sum_{m \in \mcP_c} h_m(\mbomega)} \numberthis\label{eqn:eva-partition-form}.
\end{align*}
Intuitively, we can think of \model as a calibrated version of RFA. Instead of directly computing and aggregating the random feature approximation for all tokens as in RFA (\cref{eqn:rfa-partition-form}), \model (\cref{eqn:eva-partition-form}) first constructs \emph{local} estimation for either a single token ($m \in E$) or a subset (e.g., $\mcP_c$), and then \emph{corrects} these approximations by their corresponding true attention scores (e.g., $\sum_{m \in \mcP_c}\exp(\mbq_n^\top \mbk_m)$ for $\mcP_c$). These adjusted local estimates are finally aggregated and globally normalized. Thanks to the decomposed representation of RFA, we can realize this divide-and-conquer strategy in a principled manner, which imposes finer-grained control on the whole estimation accuracy and enjoys increased approximation fidelity.

\subsection{Practical Implementation}\label{ssec:practical-eva}
According to the formulation (\cref{eqn:eva}) of \model, the terms within $E$ could be computed efficiently due to its limited size; however, the partitioning requires computing $\sum_{m \in \mcP_c}\exp(\mbq_n^\top \mbk_m)$ explicitly for each subset, which again builds up to quadratic computational complexity. As discussed above, $\sum_{m \in \mcP_c}\exp(\mbq_n^\top \mbk_m)$ serves as a weight to correct the contribution from each subset $\mcP_c$. In this regard, we propose to approximate such control by $\sum_{m \in \mcP_c}\exp(\mbq_n^\top \mbk_m) \approx \exp(\mbq_n^\top \widetilde{\mbk}_c)$, where $\widetilde{\mbk}_c$ is an adaptive vector summarizing the information of all keys belonging to $\mcP_c$ (see \cref{app:sec:eva-details} for more details). Such heuristic not only avoids computing the exponential dot product of each query-key pair explicitly, but also induces a fast approximation of the normalizing constant,
\begin{align*}
    Z = \sum_{m \in E} \exp(\mbq_n^\top \mbk_m) + \sum_{c=1}^C\sum_{m \in \mcP_c}\exp(\mbq_n^\top \mbk_m) \approx \sum_{m \in E} \exp(\mbq_n^\top \mbk_m) + \sum_{c=1}^C\exp(\mbq_n^\top \widetilde{\mbk}_c).
\end{align*}
Equipped with these results, our \model estimator (\cref{eqn:eva}) can be reduced as follows,
\begin{align*}
    \mathsf{EVA}(\mbq_n, \mbK, \mbV)  
    &\approx \frac{\sum_{m \in E} \exp(\mbq_n^\top \mbk_m)\mbv_m + \sum_{c = 1}^C\exp(\mbq_n^\top \widetilde{\mbk}_c)\widehat{\mbbeta}_c(\mbomega)}{\sum_{m \in E} \exp(\mbq_n^\top \mbk_m) + \sum_{c = 1}^C \exp(\mbq_n^\top \widetilde{\mbk}_c)}.\numberthis\label{eqn:practical-eva}
\end{align*}

\paragraph{Parameterization Details.}
We define $E$ in the same way as a simple block-wise local attention \citep{xiong2021simple}. The input sequence is first chunked into multiple blocks (or 2D windows for images), and each query $\mbq_n$ is associated with a specific $E^n$ that only contains tokens within the same block as the query. For the remaining indices $[M] \setminus E^n$, we evenly split it into $C$ contiguous chunks $\{\mcP_1^n, \dots, \mcP_C^n\}$. Note that we add the superscript $n$ here to denote the dependence on the query position; however, for notational brevity, we omit the notation when there is no ambiguity. The pseudo-code of \model is provided in \cref{alg:eva} of Appendix. More implementation details, including the definition of $\widetilde{\mbk}_c$ and $\widehat{\mbbeta}_c(\mbomega)$ in \cref{eqn:practical-eva}, are deferred to \cref{app:sec:eva-details}.

\vspace{-0.1in}\paragraph{Extension to Autoregressive Modeling.}
The decoder (or causal) self-attention, where each query can only attend to previous tokens, is the key ingredient in Transformer-based generative modeling \citep{vaswani2017attention,gpt3}. We demonstrate that it is straightforward to extend \model to support such auto-regressive modeling with few modifications. Thanks to the decomposed formulation of \model, we only need to incorporate two triangular mask matrices into the computation, which eliminate the information from future singletons $m \in E$ and entire future subsets $\mcP_c$ respectively. Unlike previous RFA methods, which are slow during training due to their recurrent computation \citep{choromanski2021rethinking,peng2021rfa}, the resulting causal variant remains highly efficient. More details can be found in \cref{app:sec:causal-eva}, including a pseudo-code \cref{alg:causal-eva}.

\begin{table}[t]
\caption{Classification accuracy on \imagenet in comparison to different RF-based approximations. \textsuperscript{\textdagger}vanilla PVT-v2-b3 \citep{pvtv2} uses a convolutional kernel to downsample key and value vectors, resulting in fewer FLOPs but with significant performance degradation.}
\label{tb:deit-pvt}
\centering
\resizebox{0.9\columnwidth}{!}{    
\begin{tabular}{l | c c c | c c c | c c c}
\toprule[.1em]
\multirow{2}{*}{Model} & \multicolumn{3}{c|}{DeiT-Tiny} & \multicolumn{3}{c}{DeiT-Small} & \multicolumn{3}{c}{PVT-v2-b3}\\
& \# Param. & FLOPs & {Top-1 Acc.} & \# Param. & FLOPs & {Top-1 Acc.} & \# Param. & FLOPs & {Top-1 Acc.} \\
\midrule
Local & 5.7M & 1.1G & 67.10 & 22.0M & 4.3G & 74.06 & 36.0M & 7.2G & 83.34\\
\midrule
Performer &  5.7M & 1.2G & 65.92 & 22.0M & 4.4G & 74.29 & 36.0M & 8.2G & 82.40 \\
LARA &  5.8M & 1.2G & 71.48 &22.2M &4.5G & 79.48 & 39.9M & 7.7G & 83.47 \\
\model (Ours) & 5.8M & 1.2G & \textbf{73.00} &22.2M & 4.4G & \textbf{80.65} & 36.1M & 7.4G & \textbf{83.71}\\
\midrule
Softmax  & 5.7M & 1.3G & \textbf{72.98} &22.0M & 4.6G & 80.36 & 45.2M & 6.9G\textsuperscript{\textdagger} & 83.14\textsuperscript{\textdagger}\\
\bottomrule[.1em]
\end{tabular}}  
\end{table}

\section{Experimental Results}\label{sec:experiments}
In this section, we evaluate our proposed method on various tasks, including image classification (\cref{ssec:exp:image}), language tasks (\cref{ssec:exp:nlp}), and Long Range Arena benchmark (\cref{app:sec:exp:lra}). Details of experimental protocols and baselines can be found in \cref{app:sec:experiment-details}.
\subsection{Image Classification}\label{ssec:exp:image}
We explore the ability to learn visual representations for different attention mechanisms in vision transformers \citep[ViTs;][]{dosovitskiy2021vit}. In particular, we replace softmax attention used in ViTs with its efficient variants and evaluate their performance on the \imagenet dataset \citep{deng2009imagenet}, which contains over 1,280K and 50K images of 1,000 classes for training and validation splits, respectively. For the transformer model, we consider both a plain ViT \citep[DeiT;][]{dosovitskiy2020image, touvron21adeit} and a pyramidal ViT \citep[PVT;][]{pvtv2} to test the performance. The former maintains the same sequence length (which is set to 196 by default) across all transformer layers, while the latter processes much longer sequences (up to 3136 tokens) at early layers and progressively reduces the sequence length to form a hierarchical structure. Detailed experimental settings could be found in \cref{app:ssec:image-classification-details}.  
\vspace{-0.1in}\paragraph{Results.} We first compare the performance of \model against our main baselines on the standard ViT architectures. As shown in \cref{tb:deit-pvt}, \model significantly improves the performance of previous RFA approaches (including Performer \citep{choromanski2021rethinking} and LARA \citep{lara}) and local attention by a large margin, and even outperforms the conventional softmax attention. We then consider a more challenging setting, where the plain architecture DeiT-Tiny is used but the sequence length is scaled up to 784 (denoted as DeiT-Tiny-784). We compare \model against other attention variants in this setting and report the classification results in \cref{tb:deit-patch8}. \model outperforms most previous baselines and remains highly competitive with softmax attention, illustrating its effectiveness.

\begin{table}[t]
\begin{minipage}[t]{.49\columnwidth}
    \centering
    \captionsetup{type=table}
    \captionof{table}{Image classification accuracy on \imagenet dataset with DeiT-Tiny-784.}
    \label{tb:deit-patch8}
    \resizebox{\columnwidth}{!}{\begin{tabular}{l c c c}
    \toprule[.1em]
    \multicolumn{1}{c}{Model} & \# Param. & FLOPs & {Top-1 Acc.} \\
    \midrule
    Performer \citep{choromanski2021rethinking} & 5.7M & 4.9G & 67.19 \\
    Local attention \citep{child2019generating} & 5.7M & 4.4G & 70.62\\
    Scatterbrain \citep{chen2021scatterbrain} & 5.7M & 5.2G & 73.50 \\
    Nystr\"omformer \citep{xiong2021nystromformer} &  5.7M & 4.8G & 74.20\\
    LARA \citep{lara} &  5.8M & 4.6G & 75.02\\
    Combiner \citep{ren2021combiner} &  5.7M & 4.7G & 75.56\\
    Long-Short \citep{zhu2021long-short} & 6.1M & 5.0G & 76.41 \\
    \model (Ours) &  5.8M & 4.6G & 76.67 \\
    \midrule
    Softmax & 5.7M & 7.0G & \textbf{77.16}\\
    \bottomrule[.1em]
    \end{tabular}}
\end{minipage}
\hfill
\begin{minipage}[t]{.49\columnwidth}
    \centering
    \captionsetup{type=table}
    \captionof{table}{Masked Language Modeling Perplexity on the \book validation dataset.}
    \label{tb:mlm}
    \resizebox{0.89\columnwidth}{!}{    
    \begin{tabular}{l c c c}
    \toprule[.1em]
    \multicolumn{1}{c}{Model} & \# Param. & FLOPs & {Perplexity} \\
    \midrule
    Performer \citep{choromanski2021rethinking} & 126M & 213G & 8.61 \\
    Linformer \citep{wang2020linformer} &  129M & 193G & 5.16 \\
    LARA \citep{lara} &  126M & 194G & 4.39 \\
    Reformer \citep{Kitaev2020reformer} &  126M & 205G & 4.28 \\
    Local attention \citep{child2019generating} & 136M & 183G & 4.27\\
    Combiner \citep{ren2021combiner} &  136M & 187G & 4.12 \\
    Long-Short \citep{zhu2021long-short} &  142M & 218G & 4.01 \\
    \midrule
    \model (Ours) & 136M & 184G & 3.94 \\
    \model-4096 (Ours) & 136M & 387G & \textbf{3.73}\\
    \midrule
    Softmax & 126M & 252G & \textbf{3.74}\\
    \bottomrule[.1em]
    \end{tabular}}
\end{minipage} 
\end{table}

\begin{table}[t]
\begin{minipage}[t]{.49\columnwidth}
    \centering
    \captionsetup{type=table}
    \captionof{table}{BLEU scores on the test set of \wmt. \textsuperscript{\textdagger} numbers are taken from \citet{lara}.}
    \label{tb:wmt14}
    \resizebox{0.6\columnwidth}{!}{    
    \begin{tabular}{l | c | c }
    \toprule[.1em]
    Model & \# Param. & {BLEU}\\
    \midrule
    Performer-128\textsuperscript{\textdagger} & 60.92M & 23.5\\
    \midrule
    {LARA-16\textsuperscript{\textdagger}} & 60.96M & 26.4 \\
    {LARA-32\textsuperscript{\textdagger}} & 60.96M & 26.8 \\
    {LARA-64\textsuperscript{\textdagger}} & 60.96M & 27.0 \\
    \midrule
    \model-16 & 60.96M & 27.2 \\
    \model-32 & 60.96M & 27.3 \\
    \model-64 & 60.96M & \textbf{27.5} \\
    \midrule
    Softmax   & 60.92M & \textbf{27.5}\\
    \bottomrule[.1em]
    \end{tabular}}  
\end{minipage}
\hfill
\begin{minipage}[t]{.49\columnwidth}
    \centering
    \captionsetup{type=table}
    \captionof{table}{Validation (Val.) and Test perplexity (PPL) on \wikitext. 256/480 indicate evaluation context window sizes. \textsuperscript{\textdagger} numbers are due to \citet{kasai2021t2r}.}
    \label{tb:wikitext103}
    \centering
    \resizebox{0.82\columnwidth}{!}{    
    \begin{tabular}{l | c | c c c c}
    \toprule
    \multirow{2}{*}{Model} & \multirow{2}{*}{\# Params.} & \multicolumn{2}{c}{256} & \multicolumn{2}{c}{480} \\
    & & {Val.} & {Test} & {Val.} & {Test}\\
    \midrule
    Softmax\textsuperscript{\textdagger} 
    & 449M & \textbf{17.9} & \textbf{18.5} &  --  &  --  \\
    ELU\textsuperscript{\textdagger}     
    & 449M & 22.0 & 22.8 &  --  &  --  \\
    RFA\textsuperscript{\textdagger}     
    & 449M & 20.4 & 21.3 & --   &  --  \\
    T2R\textsuperscript{\textdagger}     
    & 450M & 20.1 & 20.8 & --   &  --  \\
    \model (Ours) 
    & 450M & \textbf{17.9} & 18.6 & \textbf{17.7} & \textbf{18.3} \\  
    \midrule
    Softmax 
    & 247M & 18.8 & 19.5 & 18.4 & 19.1 \\
    \model (Ours) 
    & 247M & 18.8 & 19.4 & 18.5 & 19.1 \\
    \bottomrule
    \end{tabular}}
\end{minipage} 
\end{table}

\subsection{Machine Translation and Language Modeling}\label{ssec:exp:nlp}
We further evaluate \model on the natural language domain. Specifically, we consider three tasks: 
\begin{itemizesquish}
\item \textbf{Masked language modeling (MLM)} on a pretraining-scale book corpus \book in the Pile dataset suite \citep{books3,gao2020pile}, consisting of over 196,640 published books. 
\item \textbf{Machine translation (MT)} on \wmt benchmark \citep{bojar2014findings}. 
\item \textbf{Autoregressive language modeling (Autoregressive LM)} on a large-scale token-level LM benchmark \wikitext \citep{merity2016pointer}. 
\end{itemizesquish}
\vspace{-0.1in}\paragraph{Results.} 
We report \textbf{MLM} validation perplexity in \cref{tb:mlm}, where the sequence length is 2048 by default. 
\model substantially improves previous methods based on random features (including Performer and LARA) and outperforms the other efficient attention mechanisms. Thanks to the linear complexity of \model, it can be scaled further to much longer sequences. With input sequences of length increased to 4096, \model (denoted as ``\model-4096'') attains lower validation perplexity than exact softmax attention, which demonstrates its capability of scaling to much longer sequences.
Besides, \textbf{machine translation} results are compared in \cref{tb:wmt14}, where in this task $C=8$ by default and \model-$m$ denotes \model with $|E|\!= \!m$. \model outperforms previous random feature methods by a large margin and achieves translation quality on par with full softmax attention even under the setting of small $|E|$ and $C$. For \textbf{Autoregressive LM} (\cref{tb:wikitext103}), \model achieves the same perplexity as softmax attention with much lower computational complexity. Comparing against various random feature methods reported by previous work \citet{kasai2021t2r}, we observe a significant performance gain brought from \model even under a Transformer with half parameters. When further increasing the transformer model size as the setting in \citet{kasai2021t2r}, EVA still scales as effectively as softmax attention with a comparable perplexity while outperforming previous random feature methods by a larger margin. These results indicate the substantially enlarged capacity of \model to approximate softmax attention.

\begin{wraptable}[21]{R}{0.43\columnwidth}
    \caption{Classification accuracy on \imagenet dataset.}
    \label{tb:ablation-imagenet}
    \centering
    \resizebox{0.42\columnwidth}{!}{    
    \begin{tabular}{l | c c c c c}
    \toprule[.1em]
    {Model} & Mem.(G) & Time(ms/iter) & $|E|$ & $C$ & Top-1 Acc. \\
    \midrule
    \midrule
    Performer
    & 8.1 & 87 & 0  & 1 & 67.19 \\
    Local 
    & 7.8 & 65 & 49 & 0 & 70.62 \\
    \midrule
    \multirow{6}{*}{\model}
    & 8.4  & 77  & 0   & 49  &  74.33 \\
    & 9.1  & 87  & 49  & 1   &  74.10 \\ 
    & 9.4  & 89  & 49  & 16  &  75.83 \\ 
    & 9.9  & 94  & 49  & 49  &  76.67 \\ 
    & 12.5 & 119 & 49  & 196 &  77.10 \\ 
    & 11.9 & 108 & 196 & 49  &  77.36 \\ 
    \midrule
    Softmax 
    & 17.7 & 99 & n.a. & n.a. & 77.16 \\
    \bottomrule[.1em]
\end{tabular}}  
    \vspace{0.1in}
    \caption{MLM validation perplexity on \book. ``--'' indicates fail to converge.}
    \label{tb:ablation-mlm}
    \vspace{0.1in}
    \centering
    \resizebox{0.42\columnwidth}{!}{    
    \begin{tabular}{l | c c c c c}
    \toprule[.1em]
    \multicolumn{1}{l|}{Model} & Mem.(G) & Time(ms/iter) & $|E|$ & $C$ & Perplexity \\
    \midrule
    \midrule
    Performer-4096
    & 4.8 & 39 & 0   & 1 & --   \\
    Local-4096 
    & 4.4 & 29 & 256 & 0 & 4.34 \\
    \midrule
    \multirow{3}{*}{\model-4096}
    & 5.8 & 40 & 256  & 128  & 3.82 \\ 
    & 6.4 & 41 & 256  & 256  & 3.73 \\
    & 6.9 & 47 & 512  & 128  & 3.71 \\ 
    \midrule
    Softmax-4096 
    & 21.2 & 102 & n.a. & n.a. & 3.65 \\
    \bottomrule[.1em]
    \end{tabular}}  
\end{wraptable}

\subsection{Analysis}\label{ssec:experimental-analysis}
\paragraph{Running Time \& Memory Comparison.}
We conduct a simulation experiment to evaluate the empirical efficiency of various attention methods, which is measured by the running time per iteration and memory footprint under different sequence lengths. The setup can be found in \cref{app:ssec:time-mem-details}. As illustrated in \Cref{fig:time,fig:mem}, \model only incurs a little computational overhead compared to Performer and local attention and achieves much better running time speed-up than Long-Short \citep{zhu2021long-short}, a strong baseline across various tasks albeit with much longer running time and larger memory consumption. In \cref{fig:loss_vs_time}, we further visualize the speed-up of \model relative to conventional softmax attention by plotting the validation loss curve versus actual elapsed time during training transformers (equivalent to 32 GPU days). It can be seen that \model can achieve a much lower loss after running for the same elapsed time; in contrast, conventional softmax attention needs to run almost $3\times$ longer to match the loss quantity. Overall, our method attains a good trade-off between quality and empirical efficiency.

\begin{figure}[tb]
\centering
\begin{subfigure}[b]{0.3\textwidth} 
  \centering
  {\includegraphics[width=\textwidth]{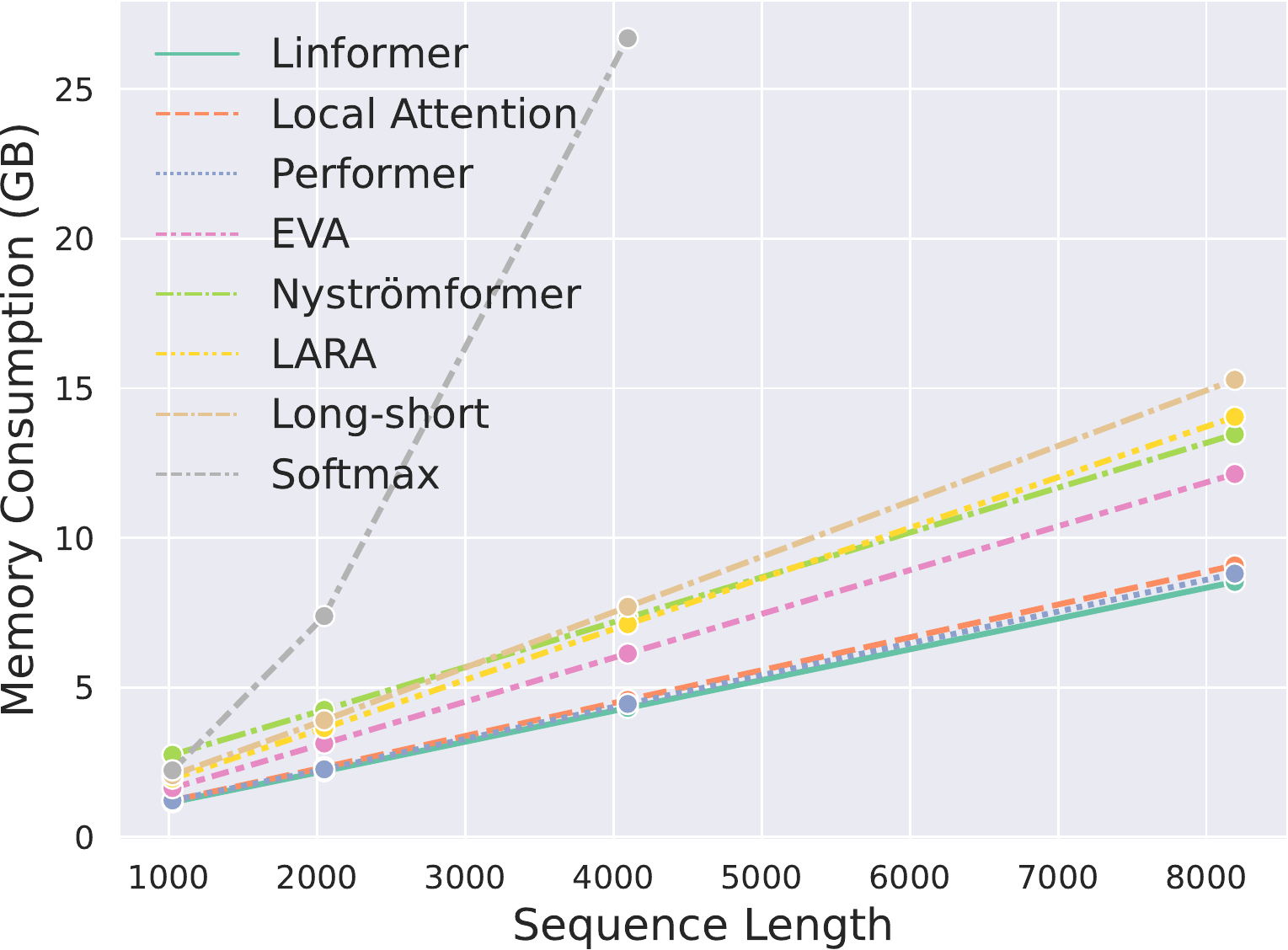}}
  \caption{Memory consumption.}\label{fig:mem}
\end{subfigure}
\hfill
\begin{subfigure}[b]{0.3\textwidth} 
  \centering
  {\includegraphics[width=\textwidth]{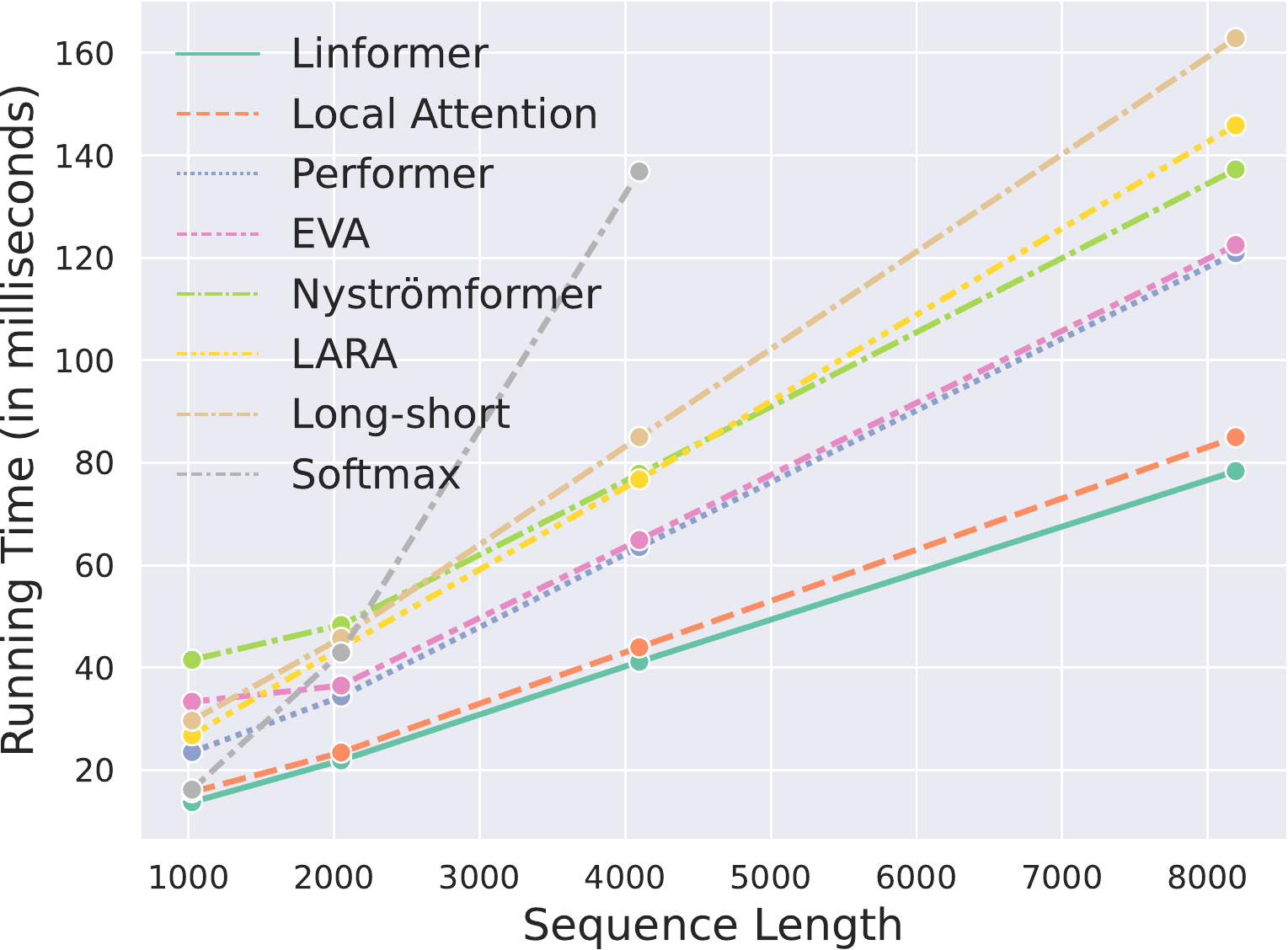}}
  \caption{Running time.}\label{fig:time}
\end{subfigure}
\hfill
\begin{subfigure}[b]{0.3\textwidth} 
  \centering
  {\includegraphics[width=\textwidth]{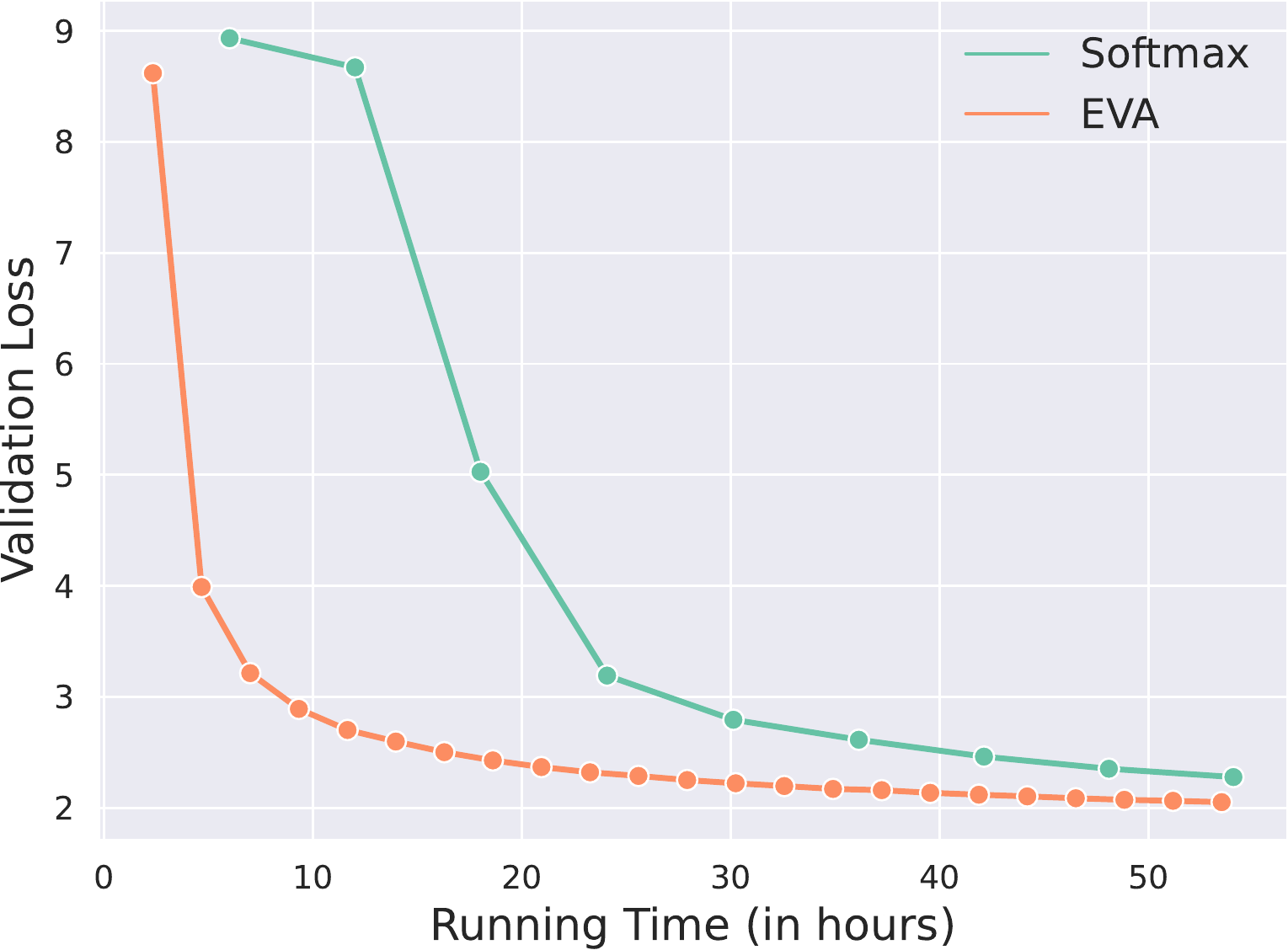}}
  \caption{Training speed-up.}\label{fig:loss_vs_time}
\end{subfigure}
\caption{
\textbf{Left} and \textbf{middle}: empirical memory consumption and running time comparison respectively of different attention mechanisms under various sequence lengths.
\textbf{Right}: a snapshot of MLM validation loss curve versus actual elapsed time during training.}
\label{fig:time_mem}
\vspace{-0.1in}
\end{figure}

\vspace{-0.1in}\paragraph{Ablation Study.}
In this section, we conduct an ablation study on image classification and MLM tasks to investigate the effects of main hyper-parameters in \model (see \cref{app:tb:ablation} for more comprehensive analysis). In particular, we vary $|E|$ and the partition size $C$ and evaluate their performance on both image classification and masked language modeling. As presented in \cref{tb:ablation-imagenet} and \cref{tb:ablation-mlm}, increasing $|E|$ amounts to obtaining exact estimates for more key-value pairs, which greatly improves empirical performance; besides, increasing $C$ would process control variates at a finer scale, also translating into better modeling quality, consistent with our theoretical analysis (\cref{ssec:cv-partially-shared}).

\section{Related Work}\label{sec:related-work}
\paragraph{Control Variates.} Control variates are a widely used variance reduction technique in reinforcement learning \citep{greensmith2004variance, grathwohl2018backpropagation, vlassis2021control}, stochastic optimization \citep{chong2013var}, variational inference \citep{paisley2012vi, bbvi, geffner2018using, rebar, grathwohl2018backpropagation}, Markov chain Monte Carlo \citep{baker2019control} and many other topics. Our construction with control variates provides a new perspective on designing faster yet more accurate attention approximations.

\vspace{-0.1in}\paragraph{Efficient Attention Mechanisms.}
A lot of research work has put the focus on reducing the quadratic complexity of conventional softmax attention. A widely used approach is to define a sparse attention pattern so that each query is limited to only attending to a subset of tokens. The sparse pattern could be either learnable \citep{Kitaev2020reformer, vyas2020fast, yi2020sinkhorn, roy2021routing, madaan2022treeformer} or simply fixed \citep{liu2018generating,parmar2018image,child2019generating,beltagy2020longformer,ainslie2020etc,zaheer2020bigbird, liu2021swin, xiong2021simple, pmlr-v162-wang22l, chen2022pixelated, hutchins2022blockrnn}. Another paradigm is to adopt low-rank approximations, including via the Nystr\"om method \citep{xiong2021nystromformer}, down-sampling with learnable projections \citep{wang2020linformer, peng2021abc}, or explicitly compressing sequences \citep{Rae2020Compressive,dai2020funnel, ma2021luna, jaegle2021perceiver}. There are also studies improving both sparse and low-rank methods for better attention matrix approximation \citep{nguyen2021fmmformer, zhu2021long-short, chen2021scatterbrain, ren2021combiner, h-transformer, hua2022flash, mra-attention}. Instead of adopting approximate methods, a recent line of work \citep{rabe2021self, dao2022flashattention} proposes to compute the exact softmax attention in an online manner \citep{milakov2018online} without materializing the full attention matrix. In this way, softmax attention can be computed in linear memory complexity, and the runtime can also be greatly improved by further minimizing memory accesses \citep{dao2022flashattention}.

\vspace{-0.1in}\paragraph{Random-Feature-based Attention.}
Random-feature-based methods are a popular alternative that uses random features \citep{random-features} to linearize exponential kernels in softmax attention \citep{katharopoulos2020transformers_are_rnns,choromanski2021rethinking,peng2021rfa}. Recent work attempts to improve RFA approximation from several aspects, such as designing more accurate random feature maps \citep{choromanski2022hybrid,likhosherstov2022chefs,chowdhury2022learning}, incorporating relative positional or other task-specific biases \citep{liutkus2021spe,luo2021stable,chen-2021-permuteformer,ripple,zhen2022cosformer,flowformer,qin2022devil}, or leveraging connections to fast weight programmers \citep{peng2021rfa,schlag21a,irie2021going}. Prior work closely related to ours includes \citet{lara}, which reinterprets RFA using self-normalized importance sampling \citep{hesterberg1995snis} and theoretically extends the random feature approximation from individual exponential kernels to the whole softmax attention. Our work further generalizes this result via control variates and characterizes the approximation gap caused by RFA.
Scatterbrain \citep{chen2021scatterbrain} is also similar to our work in that it also refines RF approximation on critical local regions. However, it is developed based on a different motivation that attempts to approximate the attention matrix with a combination of sparse and low-rank matrices. Interestingly, we find that Scatterbrain can be cast as a special case under our framework; see \cref{app:sec:connections} for a detailed discussion about connections between \model and previous attention mechanisms.

\section{Conclusion and Limitations}\label{sec:conclusion}
In this work, we develop an efficient attention mechanism \model via control variates. Our framework reveals a localized perspective of RFA approximation, which not only bridges the gap between RFA and exact softmax attention but also attains a good trade-off between modeling quality and efficiency. We evaluate our method on both vision and language tasks and demonstrate substantial improvements over previous baselines.
There are some limitations of our framework. For instance, the approximation in computing control variate estimation for each partitioned subset is crude and might limit the potential modeling capacity; in addition, we only explore the most straightforward partitioning strategy that evenly splits the sequence into multiple contiguous chunks; while in general, the partition could contain arbitrary subsequences or be adaptive to inputs via clustering methods, which can be guided by task-specific inductive biases. It is interesting to investigate these limitations to unleash the expressiveness of \model further, which we leave for future work.

\subsubsection*{Acknowledgments}
We would like to thank the HKU NLP group, the Shark-NLP group, and the anonymous reviewers for their valuable suggestions that greatly helped improve this work. This work is partially supported by the joint research scheme of the National Natural Science Foundation of China (NSFC) and the Research Grants Council (RGC) under grant number N\_HKU714/21.

\bibliography{iclr2023_conference}
\bibliographystyle{iclr2023_conference}

\newpage
\appendix
\textbf{\huge Appendix}\vspace{0.05in}
\etocsettocstyle{}{}
\etocdepthtag.toc{appendix}
\etocsettagdepth{main}{none}
\etocsettagdepth{appendix}{subsection}
\tableofcontents

\newpage
\section{A Brief Review of Vanilla Random Feature Attention}\label{app:sec:rfa}
Vanilla random feature attention methods, such as Performer \citep{choromanski2021rethinking,peng2021rfa}, seek to approximate the softmax attention mechanism through \emph{random features} \citep{random-features} $\mbphi(\mbx,\mbomega) \coloneqq  1/\sqrt{S}[\xi(\mbx,\omega_1), \dots, \xi(\mbx, \omega_S)]^\top$. Here, $\omega_1, \dots,\omega_S \sim \mcN(0,\mathbf{I})$, and $\xi(\mbx, \omega)$ is the \emph{randomized mapping} such that
\begin{equation}
    \exp\left(\mbq_n^\top \mbk_m\right) = \E[\omega_s \sim \mcN(0,\mathbf{I})]{\xi(\mbq_n, \omega_s)^\top \xi(\mbk_m, \omega_s)}.\label{app:eqn:randomized-mappings}
\end{equation}
Therefore, we can draw multiple Monte Carlo samples to estimate the exponential kernel,
\begin{align*}
    \exp\left(\mbq_n^\top \mbk_m\right) \approx \frac{1}{S} \sum_{s=1}^S \xi(\mbq_n, \omega_s)^\top \xi(\mbk_m, \omega_s) \coloneqq \mbphi(\mbq_n, \mbomega)^\top \mbphi(\mbk_m, \mbomega),
\end{align*}
and then approximate the attention mechanism as
\begin{equation}
    \sum_{m=1}^M \frac{\exp\left(\mbq_n^\top \mbk_m\right)}{\sum_{m'=1}^M \exp\left(\mbq_n^\top \mbk_{m'}\right)} \mbv_m \approx \frac{\sum_{m=1}^M\mbphi(\mbq_n, \mbomega)^\top \mbphi(\mbk_m, \mbomega)\mbv_m}{\sum_{m'=1}^M\mbphi(\mbq_n, \mbomega)^\top \mbphi(\mbk_{m'}, \mbomega)}.\label{app:eqn:vanilla-rfa}
\end{equation}
It is recently generalized as a self-normalized importance sampling estimator to approximate softmax attention \citep{lara}, as described in \cref{ssec:rfa-and-snis}. We refer the generalized random feature based approximations as RFA.

\section{Proofs \& Derivations}
\subsection{An Extended Review of Control Variates}
\label{app:ssec:cv-background}
The control variate method takes the following form,
\begin{equation}
    \widetilde{g}(\mbomega) = g(\mbomega) - \mbbeta h(\mbomega) + \mbbeta \E{h(\mbomega)},\label{app:eqn:basic_cv}
\end{equation}
Given the particular forms of $g(\cdot)$ and $h(\cdot)$, $\mbbeta$ can be optimized to minimize the estimation variance. For notational convenience, we denote the covariance between a scalar and a random vector as $\cov{h(\mbomega)}{g(\mbomega)} \coloneqq \E{\left(h(\mbomega) - \E{h(\mbomega)}\right)\left(g(\mbomega) - \E{g(\mbomega)}\right)}$, and the variance of a random vector as $\var{g(\mbomega)} \coloneqq \cov{g(\mbomega)}{g(\mbomega)}$. In particular, we have
\begin{align*}
    \var{\widetilde{g}(\mbomega)} 
    &= \var{g(\mbomega) - \mbbeta h(\mbomega)} \\
    &= \var{g(\mbomega)} - 2\cov{\mbbeta h(\mbomega)}{g(\mbomega)} + \var{\mbbeta h(\mbomega)}\\
    &= \var{g(\mbomega)} - 2\cov{h(\mbomega)}{g(\mbomega)}\mbbeta^\top  + \var{h(\mbomega)}\mbbeta \mbbeta^\top.
\end{align*}
We hope an optimal $\mbbeta$ would minimize $\trace{\var{\widetilde{g}(\mbomega)}}$, that is, the sum of estimating variance for each dimension. By differentiating, we obtain
\begin{align*}
    \mbbeta^* &= \arg\min_{\mbbeta} \trace{\var{\widetilde{g}(\mbomega)}} = \frac{\cov{h(\mbomega)}{g(\mbomega)}}{\var{h(\mbomega)}}. \numberthis\label{app:eqn:opt_beta}
\end{align*}
Since both the covariance and the variance may be intractable to compute, the optimal $\mbbeta^*$ is generally not available in closed form. Nevertheless, with the optimal coefficient, the variance of such control variate estimate would never be larger than the plain estimator $g(\cdot)$.

\subsection{Derivation of SNIS as Control Variate Estimation} \label{app:ssec:proof-snis-as-cv}
For notational convenience, we denote the importance weight as $W(\omega_s) \coloneqq p_n(\omega_s)/q(\omega_s)$. Then we have
\begin{align*}
\widetilde{g}(\mbomega) 
     &= \frac{\sum_{s=1}^S\frac{p_n(\omega_s)}{q(\omega_s)}f(\omega_s)}{\sum_{s=1}^S\frac{p_n(\omega_s)}{q(\omega_s)}} = \frac{\sum_{s=1}^SW(\omega_s)f(\omega_s)}{\sum_{s=1}^SW(\omega_s)} \\
     &= \frac{\sum_{s=1}^SW(\omega_s)f(\omega_s)}{\sum_{s=1}^SW(\omega_s)} - \frac{1}{S}\sum_{s=1}^SW(\omega_s)f(\omega_s) + \frac{1}{S}\sum_{s=1}^SW(\omega_s)f(\omega_s)\\
     &= \frac{\sum_{s=1}^SW(\omega_s)f(\omega_s)}{\sum_{s=1}^SW(\omega_s)} - \frac{\sum_{s=1}^SW(\omega_s)}{\sum_{s=1}^SW(\omega_s)}\frac{1}{S}\sum_{s=1}^SW(\omega_s)f(\omega_s) + \frac{1}{S}\sum_{s=1}^SW(\omega_s)f(\omega_s)\\
     &= \frac{1 - \frac{1}{S}\sum_{s=1}^SW(\omega_s)}{\sum_{s=1}^SW(\omega_s)} \sum_{s=1}^SW(\omega_s)f(\omega_s)+ \frac{1}{S}\sum_{s=1}^SW(\omega_s)f(\omega_s)\\
     &= \frac{\sum_{s=1}^SW(\omega_s)f(\omega_s)}{\sum_{s=1}^SW(\omega_s)}\left(1 - \frac{1}{S}\sum_{s=1}^SW(\omega_s)\right) + \frac{1}{S}\sum_{s=1}^SW(\omega_s)f(\omega_s)\\
     &= \frac{1}{S}\sum_{s=1}^SW(\omega_s)f(\omega_s) - \frac{\sum_{s=1}^SW(\omega_s)f(\omega_s)}{\sum_{s=1}^SW(\omega_s)} \left(\frac{1}{S}\sum_{s=1}^SW(\omega_s) - 1\right)\\
     &= g(\mbomega) - \widehat{\mbbeta}(\mbomega) \left(h(\mbomega) - \E{h(\mbomega)}\right),
\end{align*}
Note that the expectation of importance weights equals 1, that is,
\begin{align*}
    \E{h(\mbomega)} \!=\! \E{\frac{1}{S}\sum_{s=1}^SW(\omega_s)} \!=\! \E[\omega_1,\dots,\omega_S \sim q(\omega)]{\sum_{s=1}^S\frac{1}{S}\frac{p(\omega_s)}{q(\omega_s)}} = \frac{1}{S} \sum_{s=1}^S \E[\omega_s \sim q(\omega)]{\frac{p(\omega_s)}{q(\omega_s)}} = 1.
\end{align*}
Same as SNIS, this estimator is still biased due to the dependence of $\widehat{\mbbeta}(\mbomega)$ on $\mbomega$. However, it would asymptotically become unbiased since $\widehat{\mbbeta}(\mbomega)$ is consistent and converges to a constant $\mbbeta$ w.r.t. $\omega$ given a large number of samples,
\begin{align*}
    \widehat{\mbbeta}(\mbomega) = \frac{g(\mbomega)}{h(\mbomega)} \xrightarrow{p} \frac{\E{g(\mbomega)}}{\E{h(\mbomega)}} = \underbrace{\E[p_n(\omega)]{f(\omega)}}_{\text{constant}} \coloneqq \mbbeta.\numberthis\label{app:eqn:snis_limiting_beta}
\end{align*}

\subsection{Derivation of the Expectation of Per-term Control Variates}\label{app:ssec:expected_h_m}
According to the definition of randomized mappings, we have
\begin{align*}
    \E{h_m(\mbomega)} &= \E[\omega_1, \dots, \omega_S \sim q(\omega)]{\frac{1}{S}\sum_{s=1}^S\frac{\mathcal{N}(\omega_s;0,\mathbf{I})}{Zq(\omega_s)}\xi(\mbq_n, \omega_s)\xi(\mbk_m, \omega_s)} \\
    &= \frac{1}{S}\sum_{s=1}^S \frac{1}{Z} \int \xi(\mbq_n, \omega_s)\xi(\mbk_m, \omega_s)\mathcal{N}(\omega_s;0,\mathbf{I}) d\omega_s \\
    &= \frac{\exp(\mbq_n^\top \mbk_m)}{Z}.\numberthis\label{app:eqn:expected_h_m}
\end{align*}

\subsection{Proof of \cref{prop:optimal-beta}}\label{app:ssec:proof-prop-1}
\begin{proof}
We start with the formulation of $g(\cdot)$ and $h(\cdot)$,
\begin{align*}
    \frac{g_m(\mbomega)}{h_m(\mbomega)} =  \frac{\sum_{s=1}^S\frac{\mathcal{N}(\omega_s;0,\mathbf{I})}{Zq(\omega_s)}\xi(\mbq_n, \omega_s)\xi(\mbk_m, \omega_s)\mbv_m}{\sum_{s=1}^S\frac{\mathcal{N}(\omega_s;0,\mathbf{I})}{Zq(\omega_s)}\xi(\mbq_n, \omega_s)\xi(\mbk_m, \omega_s)} = \mbv_m.
\end{align*}
As a result, we have $g_m(\mbomega) = h_m(\mbomega)\mbv_m$ and $\E{g_m(\mbomega)} = \E{h_m(\mbomega)}\mbv_m$. We now investigate the optimal $\mbbeta_m$ according to \cref{app:eqn:opt_beta},
\begin{align*}
    \mbbeta^*_m &= \arg\min_{\mbbeta} \trace{\var{\widetilde{g}_m(\mbomega)}} \\
    &= \frac{\cov{h_m(\mbomega)}{g_m(\mbomega)}}{\var{h_m(\mbomega)}} \\
    &= \frac{\E{\left(h(\mbomega) - \E{h(\mbomega)}\right)\left(h(\mbomega) - \E{h(\mbomega)}\right)}\mbv_m}{\E{\left(h(\mbomega) - \E{h(\mbomega)}\right)\left(h(\mbomega) - \E{h(\mbomega)}\right)}} \\
    &= \mbv_m = \frac{g_m(\mbomega)}{h_m(\mbomega)}.
\end{align*}
In terms of the variance, we again use $g_m(\mbomega) = h_m(\mbomega)\mbv_m$ to obtain
\begin{align*}
\widetilde{g}_m(\mbomega) 
&= g_m(\mbomega) - \widehat{\mbbeta}_m \left(h_m(\mbomega) - \E{h_m(\mbomega)}\right) \\
&= g_m(\mbomega) - \mbv_m h_m(\mbomega) + \mbv_m \E{h_m(\mbomega)} \\
&= \mbv_m \E{h_m(\mbomega)} \\
&= \frac{\exp(\mbq_n^\top \mbk_m)}{Z} \mbv_m. \numberthis\label{eqn:exact_estimate}
\end{align*}
Since this holds true for every term $m=1,\dots,M$, our estimate becomes exactly softmax attention,
\begin{align*}
    \widetilde{g}(\mbomega) = \sum_{m=1}^M \widetilde{g}_m(\mbomega) = \sum_{m=1}^M \frac{\exp(\mbq_n^\top \mbk_m)}{Z} \mbv_m = \sum_{m=1}^M \frac{\exp(\mbq_n^\top \mbk_m)}{\sum_{m'=1}^M \exp(\mbq_n^\top \mbk_m)} \mbv_m.
\end{align*}
Since all randomness is eliminated, the estimate is exact with zero bias and variance. That is, $\mathsf{RFA}(\mbq_n, \mbK, \mbV) = \widetilde{g}(\mbomega) = \mathsf{SoftmaxAttn}(\mbq_n, \mbK, \mbV)$.
\end{proof}

\subsection{A Formal Analysis of the Advantage of Partitioning}\label{app:ssec:analysis-of-partitioning}
In this section, we demonstrate the advantage of partitioning by showing that there always exists some set $\{\mbbeta_c\}_{c=1}^C$ that achieves lower weighted MSE than any globally shared coefficient, as discussed in \cref{ssec:cv-partially-shared}.
\begin{lemma}
Suppose $\mbbeta^*_m$ is the optimal coefficient for each $m \in [M]$ as defined in \cref{prop:optimal-beta}, and $\mcP_1, \mcP_2, \dots, \mcP_C$ are an arbitrary partition of $[M]$, where each subset $\mcP_c$ is associated with a distinct $\mbbeta_c$. We consider the following weighted mean squared error, 
\begin{equation}
    J(\mbbeta_1, \dots, \mbbeta_C) \coloneqq \sum_{c=1}^C \sum_{m \in \mcP_c} \alpha_m \norm{\mbbeta_c - \mbbeta_{m}^*}^2,\label{app:eqn:general-weighted-mse}
\end{equation}
where $\alpha_m > 0$ for each $m \in [M]$ and $\sum_{c=1}^C \sum_{m \in \mcP_c} \alpha_m = 1$. Then for any choice of $\{\alpha_m\}_{m=1}^M$ and any globally shared coefficient $\mbbeta$, there exists some $\{\mbbeta^*_c\}_{c=1}^C$ so that
\begin{align*}
    J(\mbbeta_1 = \mbbeta, \dots, \mbbeta_C = \mbbeta) \geq J(\mbbeta_1=\mbbeta_1^*, \dots, \mbbeta_C=\mbbeta_C^*).
\end{align*}
\end{lemma}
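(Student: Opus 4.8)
The plan is to prove the statement by choosing, for each subset $\mcP_c$, the coefficient $\mbbeta_c^*$ that minimizes the within-subset weighted error, and then argue that this choice can only decrease the total objective relative to any fixed global $\mbbeta$. The crucial observation is that $J$ is \emph{separable} across subsets: since the partition is disjoint, $J(\mbbeta_1,\dots,\mbbeta_C) = \sum_{c=1}^C J_c(\mbbeta_c)$ where $J_c(\mbbeta_c) \coloneqq \sum_{m \in \mcP_c} \alpha_m \norm{\mbbeta_c - \mbbeta_m^*}^2$ depends only on $\mbbeta_c$. Hence each $\mbbeta_c$ may be optimized independently, and the global minimizer of $J$ is obtained by minimizing each $J_c$ separately.

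First I would compute the per-subset minimizer. Each $J_c(\mbbeta_c)$ is a nonnegative quadratic in $\mbbeta_c$, so differentiating and setting the gradient to zero gives the weighted centroid
\begin{equation*}
    \mbbeta_c^* = \frac{\sum_{m \in \mcP_c} \alpha_m \mbbeta_m^*}{\sum_{m \in \mcP_c} \alpha_m}.
\end{equation*}
This is the unique stationary point, and since $J_c$ is convex (its Hessian is $2\left(\sum_{m \in \mcP_c}\alpha_m\right)\mbI \succeq 0$), it is the global minimizer of $J_c$.

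Next I would invoke the defining property of a minimizer: because $\mbbeta_c^*$ minimizes $J_c$, we have $J_c(\mbbeta_c^*) \leq J_c(\mbbeta)$ for \emph{any} vector $\mbbeta$, in particular for the globally shared coefficient. Summing this inequality over all $c = 1,\dots,C$ and using separability yields
\begin{equation*}
    J(\mbbeta_1=\mbbeta_1^*,\dots,\mbbeta_C=\mbbeta_C^*) = \sum_{c=1}^C J_c(\mbbeta_c^*) \leq \sum_{c=1}^C J_c(\mbbeta) = J(\mbbeta_1=\mbbeta,\dots,\mbbeta_C=\mbbeta),
\end{equation*}
which is exactly the claimed inequality. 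I would note that the argument holds for any choice of weights $\{\alpha_m\}$ satisfying the normalization and positivity constraints, since these only enter through the centroid formula and the convexity check; the normalization $\sum_{c,m}\alpha_m = 1$ plays no essential role here beyond making $J$ a genuine weighted average.

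The argument is essentially routine, so there is no serious mathematical obstacle; the only subtlety worth flagging is the separability step, which relies on the partition being disjoint so that each $\mbbeta_m^*$ appears in exactly one $J_c$. One presentational point is that this lemma is stated with generic weights $\alpha_m$, whereas \cref{prop:finer-grain-advantage} uses the specific softmax weights $\exp(\mbq_n^\top\mbk_m)/\sum_{m'\in U}\exp(\mbq_n^\top\mbk_{m'})$; I would keep the proof at the generic level so that it immediately specializes, and I would remark that under those particular weights the centroid $\mbbeta_c^*$ coincides with the ratio $\E{\sum_{m\in\mcP_c}g_m(\mbomega)}/\E{\sum_{m\in\mcP_c}h_m(\mbomega)}$ given in \cref{eqn:partition-opt-beta}, since $\E{h_m(\mbomega)} = \exp(\mbq_n^\top\mbk_m)/Z$ and $\mbbeta_m^* = \mbv_m = \E{g_m(\mbomega)}/\E{h_m(\mbomega)}$ make the weighted centroid reduce to that expectation ratio.
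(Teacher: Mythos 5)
Your proposal is correct and takes essentially the same route as the paper: both select the weighted centroid $\mbbeta_c^* = \bigl(\sum_{m \in \mcP_c}\alpha_m\mbbeta_m^*\bigr)/\bigl(\sum_{m \in \mcP_c}\alpha_m\bigr)$ and conclude that it cannot do worse than any shared $\mbbeta$. The only cosmetic difference is that the paper verifies the inequality by an explicit Pythagorean expansion of $\norm{\mbbeta - \mbbeta_m^*}^2$ around $\mbbeta_c^*$ (showing the cross term vanishes), whereas you invoke separability of $J$ across the disjoint subsets together with convexity and the first-order condition --- two equivalent ways of saying the centroid minimizes each $J_c$.
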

\begin{proof}
    Let $\mbbeta^*_c = \frac{\sum_{m \in \mcP_c} \alpha_m \mbbeta^*_m}{\sum_{m \in \mcP_c} \alpha_m}$ for each $c = 1,\dots,C$. Then we have 
    \begin{align*}
        \sum_{m \in \mcP_c} \alpha_m \left(\mbbeta^*_c - \mbbeta_{m}^*\right) &= \mbbeta^*_c \left(\sum_{m \in \mcP_c} \alpha_m\right) - \sum_{m \in \mcP_c} \alpha_m \mbbeta_{m}^* \\
        &= \frac{\sum_{m \in \mcP_c} \alpha_m \mbbeta^*_m}{\sum_{m \in \mcP_c} \alpha_m} \left(\sum_{m \in \mcP_c} \alpha_m\right) - \sum_{m \in \mcP_c} \alpha_m \mbbeta_{m}^* \\
        &= \sum_{m \in \mcP_c} \alpha_m \mbbeta^*_m - \sum_{m \in \mcP_c} \alpha_m \mbbeta^*_m = 0.\numberthis\label{app:eqn:lemma-helper-1}
    \end{align*}
    According to \cref{app:eqn:general-weighted-mse,app:eqn:lemma-helper-1}, for any $\mbbeta$ we have the following inequality,
    \begin{align*}
        &J(\mbbeta_1 = \mbbeta, \dots, \mbbeta_C = \mbbeta) \\
        &= \sum_{c=1}^C \sum_{m \in \mcP_c} \alpha_m \norm{\mbbeta - \mbbeta_{m}^*}^2 \\
        &= \sum_{c=1}^C \sum_{m \in \mcP_c} \alpha_m \norm{\mbbeta - \mbbeta^*_c + \mbbeta^*_c - \mbbeta_{m}^*}^2 \\
        &= \sum_{c=1}^C \sum_{m \in \mcP_c} \alpha_m \left(\norm{\mbbeta - \mbbeta^*_c }^2 + 2\left(\mbbeta - \mbbeta^*_c\right)^\top\left(\mbbeta^*_c - \mbbeta_{m}^*\right) + \norm{\mbbeta^*_c - \mbbeta_{m}^*}^2\right)\\
        &= \sum_{c=1}^C \sum_{m \in \mcP_c} \!\!\alpha_m \norm{\mbbeta - \mbbeta^*_c }^2 + 2\underbrace{\sum_{c=1}^C \sum_{m \in \mcP_c}\!\! \alpha_m \left(\mbbeta - \mbbeta^*_c\right)^\top\left(\mbbeta^*_c - \mbbeta_{m}^*\right)}_{=0} + \sum_{c=1}^C \sum_{m \in \mcP_c} \!\!\alpha_m\norm{\mbbeta^*_c - \mbbeta_{m}^*}^2\\
        &= \sum_{c=1}^C \sum_{m \in \mcP_c} \alpha_m \norm{\mbbeta - \mbbeta^*_c }^2 + \sum_{c=1}^C \sum_{m \in \mcP_c} \alpha_m\norm{\mbbeta^*_c - \mbbeta_{m}^*}^2\\
        &\geq \sum_{c=1}^C \sum_{m \in \mcP_c} \alpha_m\norm{\mbbeta^*_c - \mbbeta_{m}^*}^2 \\
        &= J(\mbbeta_1=\mbbeta_1^*, \dots, \mbbeta_C=\mbbeta_C^*).
    \end{align*}
    As a result, for any choice of $\{\alpha_m\}_{m=1}^M$ and any globally shared coefficient $\mbbeta$, there always exists some $\{\mbbeta_c\}_{c=1}^C$ that achieves lower (or equal) weighted MSE, and a solution can be simply $\mbbeta_c = \frac{\sum_{m \in \mcP_c} \alpha_m \mbbeta^*_m}{\sum_{m \in \mcP_c} \alpha_m}$.
\end{proof}

\subsection{Proof of \cref{prop:finer-grain-advantage}}\label{app:ssec:proof-prop-2}
\begin{proof}
We first consider the case of partitioned indices, where each subset $\mcP_c$ is associated with some specific $\mbbeta_c$.
To see the global minimum of $J$, we differentiate on both sides and obtain
\begin{align*}
    \frac{\partial J(\mbbeta_1, \dots, \mbbeta_C)}{\partial \mbbeta_c} &= \frac{\partial}{\partial \mbbeta_c} \sum_{c=1}^C \sum_{m \in \mcP_c} \frac{\exp\left(\mbq_n^\top \mbk_m\right)}{\sum_{m' \in U} \exp\left(\mbq_n^\top \mbk_{m'}\right)} \norm{\mbbeta_c - \mbbeta_{m}^*}^2\\
    &= \sum_{m \in \mcP_c} \frac{\exp\left(\mbq_n^\top \mbk_m\right)}{\sum_{m' \in U} \exp\left(\mbq_n^\top \mbk_{m'}\right)} 2\left(\mbbeta_c - \mbbeta_{m}^*\right).
\end{align*}
By setting the partial derivative to zero, we obtain
\begin{align*}
    \mbbeta_c^* &= \frac{\sum_{m \in \mcP_c} \exp\left(\mbq_n^\top \mbk_m\right)\mbbeta_{m}^*}{\sum_{m \in \mcP_c} \exp\left(\mbq_n^\top \mbk_m\right)}= \frac{\sum_{m \in \mcP_c} \exp\left(\mbq_n^\top \mbk_m\right)\mbv_{m}}{\sum_{m \in \mcP_c} \exp\left(\mbq_n^\top \mbk_m\right)}\\
    &= \frac{\sum_{m \in \mcP_c} \E{g_m(\mbomega)}}{\sum_{m \in \mcP_c} \E{ h_m(\mbomega)}} = \frac{\E{\sum_{m \in \mcP_c} g_m(\mbomega)}}{\E{\sum_{m \in \mcP_c} h_m(\mbomega)}}.
\end{align*}
As a consequence, with $\mbbeta_c = \mbbeta_c^*$, the partition scheme must achieve lower weighted mean squared error than any globally shared $\widehat{\mbbeta}$, that is, $J(\mbbeta_1=\mbbeta_1^*, \dots, \mbbeta_C=\mbbeta_C^*) \leq J(\mbbeta_1=\widehat{\mbbeta}, \dots, \mbbeta_C=\widehat{\mbbeta})$.

In fact, with $\mbbeta_c = \mbbeta_c^*$, the partition scheme usually enjoys much lower error than adopting a globally shared coefficient. To see the error reduction of using the partitioned strategy, we first have
\begin{align*}
    &J(\mbbeta_1=\widehat{\mbbeta}, \dots, \mbbeta_C=\widehat{\mbbeta}) \\
    &= \sum_{c=1}^C \sum_{m \in \mcP_c} \frac{\exp\left(\mbq_n^\top \mbk_m\right)}{\sum_{m' \in U} \exp\left(\mbq_n^\top \mbk_{m'}\right)} \norm{\widehat{\mbbeta} - \mbbeta_{m}^*}^2\\
    &= \sum_{c=1}^C \sum_{m \in \mcP_c} \frac{\exp\left(\mbq_n^\top \mbk_m\right)}{\sum_{m' \in U} \exp\left(\mbq_n^\top \mbk_{m'}\right)} \norm{\widehat{\mbbeta} - \mbbeta_c + \mbbeta_c - \mbbeta_{m}^*}^2\\
    &= \sum_{c=1}^C \sum_{m \in \mcP_c} \frac{\exp\left(\mbq_n^\top \mbk_m\right)}{\sum_{m' \in U} \exp\left(\mbq_n^\top \mbk_{m'}\right)} \norm{\widehat{\mbbeta} - \mbbeta_c + \mbbeta_c - \mbbeta_{m}^*}^2\\
    &= \sum_{c=1}^C \sum_{m \in \mcP_c} \frac{\exp\left(\mbq_n^\top \mbk_m\right)}{\sum_{m' \in U} \exp\left(\mbq_n^\top \mbk_{m'}\right)} \left(
    \norm{\widehat{\mbbeta} - \mbbeta_c}^2 + \left(\widehat{\mbbeta} - \mbbeta_c\right)^\top\left(\mbbeta_c - \mbbeta_{m}^*\right) + \norm{\mbbeta_c - \mbbeta_{m}^*}^2\right).
\end{align*}
Since
\begin{align*}
&\sum_{c=1}^C \sum_{m \in \mcP_c} \frac{\exp\left(\mbq_n^\top \mbk_m\right)}{\sum_{m' \in U} \exp\left(\mbq_n^\top \mbk_{m'}\right)} \left(\widehat{\mbbeta} - \mbbeta_c\right)^\top\left(\mbbeta_c - \mbbeta_{m}^*\right) \\
&= \sum_{c=1}^C \left(\widehat{\mbbeta} - \mbbeta_c\right)^\top  \sum_{m \in \mcP_c}\frac{\exp\left(\mbq_n^\top \mbk_m\right)}{\sum_{m' \in U} \exp\left(\mbq_n^\top \mbk_{m'}\right)} \left(\mbbeta_c - \mbbeta_{m}^*\right) \\
&= \sum_{c=1}^C \left(\widehat{\mbbeta} - \mbbeta_c\right)^\top  \sum_{m \in \mcP_c}\frac{\exp\left(\mbq_n^\top \mbk_m\right)}{\sum_{m' \in U} \exp\left(\mbq_n^\top \mbk_{m'}\right)} \left(\frac{\sum_{m \in \mcP_c} \exp\left(\mbq_n^\top \mbk_m\right)\mbv_{m}}{\sum_{m \in \mcP_c} \exp\left(\mbq_n^\top \mbk_m\right)} - \mbbeta_{m}^*\right) \\
&= \sum_{c=1}^C \left(\widehat{\mbbeta} - \mbbeta_c\right)^\top  \frac{\sum_{m \in \mcP_c} \exp\left(\mbq_n^\top \mbk_m\right)\left(\mbv_{m} - \mbbeta_{m}^*\right)}{\sum_{m' \in U} \exp\left(\mbq_n^\top \mbk_{m'}\right)} \\
&= 0,
\end{align*}
plugging this result back we obtain
\begin{align*}
    &J(\mbbeta_1=\widehat{\mbbeta}, \dots, \mbbeta_C=\widehat{\mbbeta}) \\
    &= \sum_{c=1}^C \sum_{m \in \mcP_c} \frac{\exp\left(\mbq_n^\top \mbk_m\right)}{\sum_{m' \in U} \exp\left(\mbq_n^\top \mbk_{m'}\right)} \left(
    \norm{\widehat{\mbbeta} - \mbbeta_c}^2 + \left(\widehat{\mbbeta} - \mbbeta_c\right)^\top\left(\mbbeta_c - \mbbeta_{m}^*\right) + \norm{\mbbeta_c - \mbbeta_{m}^*}^2\right)\\
    &= \sum_{c=1}^C \sum_{m \in \mcP_c} \frac{\exp\left(\mbq_n^\top \mbk_m\right)}{\sum_{m' \in U} \exp\left(\mbq_n^\top \mbk_{m'}\right)} \left(
    \norm{\widehat{\mbbeta} - \mbbeta_c}^2 + \norm{\mbbeta_c - \mbbeta_{m}^*}^2\right)\\
    &= \underbrace{\sum_{c=1}^C \frac{\sum_{m \in \mcP_c}\exp\left(\mbq_n^\top \mbk_m\right)}{\sum_{m' \in U} \exp\left(\mbq_n^\top \mbk_{m'}\right)}
    \norm{\widehat{\mbbeta} - \mbbeta_c}^2}_{\geq 0} + \sum_{c=1}^C \sum_{m \in \mcP_c} \frac{\exp\left(\mbq_n^\top \mbk_m\right)}{\sum_{m' \in U} \exp\left(\mbq_n^\top \mbk_{m'}\right)} \norm{\mbbeta_c - \mbbeta_{m}^*}^2\\
    &\geq \sum_{c=1}^C \sum_{m \in \mcP_c} \frac{\exp\left(\mbq_n^\top \mbk_m\right)}{\sum_{m' \in U} \exp\left(\mbq_n^\top \mbk_{m'}\right)} \norm{\mbbeta_c - \mbbeta_{m}^*}^2.
\end{align*}
The last inequality holds since the first term is always non-negative. Note that the first term computes the squared error between $\widehat{\mbbeta}$ and each ${\mbbeta}_c$, weighted by the sum of attention scores over the corresponding subset. As a result, it is usually positive and the error reduction is significant if each ${\mbbeta}_c$ deviates from $\widehat{\mbbeta}$ a lot. However, although the optimal coefficient in the partitioning always leads to lower error to the optimal individual coefficient, note that it does not necessarily yield lower estimation variance.
\end{proof}

\subsection{Derivation of \cref{eqn:rfa-partition-form,eqn:eva-partition-form}}
\label{app:ssec:proof-rfa-as-correcters}
According to the definition of $g_m(\cdot)$ and $h_m(\cdot)$ in \cref{ssec:rfa-and-cv}, for the vanilla RFA (\cref{eqn:rfa-snis}) we have
\begin{align*}
    \mathsf{RFA}(\mbq_n, \mbK, \mbV) 
    &= \frac{\sum_{s=1}^S\frac{p_n(\omega_s)}{q(\omega_s)}f(\omega_s)}{\sum_{s=1}^S\frac{p_n(\omega_s)}{q(\omega_s)}} = \frac{g(\mbomega)}{ h(\mbomega)} = \frac{\sum_{m=1}^M g_m(\mbomega)}{\sum_{m=1}^M h_m(\mbomega)}.
\end{align*}
Besides, since $\mbv_m = g_m(\mbomega)/h_m(\mbomega)$ and $\widehat{\mbbeta}_c(\mbomega) = \left(\sum_{m \in \mcP_c} g_m(\mbomega)\right)/\left(\sum_{m \in \mcP_c} h_m(\mbomega)\right)$, we can re-write \model as
\begin{align*}
    \mathsf{EVA}(\mbq_n, \mbK, \mbV) &\coloneqq \widetilde{g}(\mbomega) \\
    &= \sum_{m \in E} \widetilde{g}_m(\mbomega) + \sum_{m \notin E} \widetilde{g}_m(\mbomega) \\
    &= \sum_{m \in E} \frac{\exp(\mbq_n^\top \mbk_m)}{Z}\mbv_m + 
        \sum_{c = 1}^C \frac{\sum_{m \in \mcP_c}\exp(\mbq_n^\top \mbk_m)}{Z}\widehat{\mbbeta}_c(\mbomega)\\
    &= \sum_{m \in E} \frac{\exp(\mbq_n^\top\mbk_m)}{Z}\frac{g_m(\mbomega)}{h_m(\mbomega)}+ 
    \sum_{c = 1}^C \frac{\sum_{m \in \mcP_c}\exp(\mbq_n^\top \mbk_m)}{Z}\frac{\sum_{m \in \mcP_c} g_m(\mbomega)}{\sum_{m \in \mcP_c} h_m(\mbomega)}.
\end{align*}

\section{More Implementation Details for EVA}\label{app:sec:eva-details}
In this section, we provide more details of \model. We also conduct a comprehensive ablation study to test the effect of different components in our implementation and report the results in \cref{app:tb:ablation}. The pseudo-code for \model is listed in \cref{alg:eva}.

\begin{table}[t]
\caption{Classification results on \imagenet dataset under different hyper-parameter configurations of \model. By default, we set $|E| = 49$ and $C = 49$ across all variants below.}
\label{app:tb:ablation}
\centering
\resizebox{0.9\columnwidth}{!}{    
\begin{tabular}{l | l | c}
\toprule[.1em]
\multicolumn{1}{c|}{Component} & \multicolumn{1}{c|}{Specification} & Top-1 Acc. \\
\midrule
\multirow{2}{*}{Partition Scheme of $\{\mcP_1, \dots,\mcP_C\}$} 
& partition over $[M] \setminus E$ & 76.53 \\
& partition over $[M] $ & 76.39 \\
\midrule
\multirow{2}{*}{Parameterization of $\sigma(\cdot)$} 
& $\sigma(\cdot) = \operatorname{LN}(\operatorname{Linear}(\cdot))$  & 76.67 \\
& $\sigma(\cdot) = \operatorname{Identity}(\cdot)$  & 75.95 \\
\midrule
\multirow{2}{*}{Number of Groups ($C = 1$)}
& Number of Samples = 1 & 74.10 \\
& Number of Samples = 49 & 76.39 \\
\midrule
\multirow{2}{*}{Number of Groups ($C = 49$)}
& Number of Samples = 1 & 76.67 \\
& Number of Samples = 49 & 76.75 \\
\midrule
\multirow{4}{*}{Proposal Parameterization $q_c(\omega) \coloneqq \mathcal{N}(\omega; \mbmu_c, \mathbf{I})$} 
& $\mbmu_c = \widetilde{\mbq}_c + \widetilde{\mbk}_c$ & 76.67 \\
& $\mbmu_c = \widetilde{\mbq}_c$ & 76.77 \\
& $\mbmu_c = \boldsymbol{0}$ & 76.24 \\
& $\mbmu_c = \operatorname{Trainable\,parameters}$ & 76.39 \\
\midrule
\multicolumn{2}{c|}{Softmax} &  77.16 \\
\bottomrule[.1em]
\end{tabular}}  
\end{table}

\paragraph{Approximating $\sum_{m \in \mcP_c}\exp(\mbq_n^\top \mbk_m)$ and Parameterizing $\widetilde{\mbk}_c$.}
In our implementation, we approximate the sum of exponentials as
$\sum_{m \in \mcP_c}\exp(\mbq_n^\top \mbk_m) \approx \exp(\mbq_n^\top \widetilde{\mbk}_c)$. Here we provide an informal justification for this approximation.

Our main motivation for such approximation is based on the simple intuition that the sum of exponentials grows as fast as the maximum exponential value, as reflected by the following inequality,
\begin{align*}
    \max_{m \in \mcP_c} \exp(\mbq_n^\top \mbk_m) \leq \sum_{m \in \mcP_c} \exp(\mbq_n^\top \mbk_m) \leq |\mcP_c| \max_{m \in \mcP_c} \exp(\mbq_n^\top \mbk_m).
\end{align*}
This means we can approximate the sum of exponentials by first computing the group representative $\widetilde{\mbk}_c \coloneqq \argmax_{\mbk_m \in \{\mbk_m | m \in \mcP_c\}} \exp(\mbq_n^\top \mbk_m)$, evaluating the corresponding exponential $\exp(\mbq_n^\top \widetilde{\mbk}_c)$ and then multiplying it by some scalar. Since computing the argmax operation still needs to compare each exponential dot-product, it will still incur quadratic computational costs. To circumvent this, we adopt a heuristic strategy that computes a learnable group representation, which attempts to compensate for the approximation error while only evaluating one exponential dot product.

Through preliminary experiments, we try various choices to compute the representative vector of each subset, such as max and average pooling; however, we found these strategies produce almost equally good performance. As a result, we adopt the average pooling by default due to its simplicity. To be specific, we implement it as 
\begin{equation}
    \widetilde{\mbk}_c = \sigma\left(\frac{1}{|\mcP_c|}\sum_{m \in \mcP_c} \mbk_m\right),\label{app:eqn:tilde-k}
\end{equation}
where $\sigma(\cdot)$ is a trainable linear projection with the same hidden dimension size as inputs, followed by a layer normalization operation \citep{ba2016layer} to stabilize training. We leave further improving the approximation, such as deriving tighter error bounds or using more expressive pooling methods \citep{zaheer2017deep, ou2022learning} as future work.

\paragraph{Parameterizing $\widehat{\mbbeta}_c(\mbomega)$.}
As discussed in \cref{ssec:cv-partially-shared}, we have
\begin{align*}
\widehat{\mbbeta}_c(\mbomega) &= \frac{\sum_{m \in \mcP_c} g_m(\mbomega)}{\sum_{m \in \mcP_c} h_m(\mbomega)} = \frac{\sum_{s=1}^S\frac{\mathcal{N}(\omega_s;0,\mathbf{I})}{Zq(\omega_s)}\sum_{m \in \mcP_c}\xi(\mbq_n, \omega_s)\xi(\mbk_m, \omega_s)\mbv_m}{\sum_{s=1}^S\frac{\mathcal{N}(\omega_s;0,\mathbf{I})}{Zq(\omega_s)}\sum_{m \in \mcP_c}\xi(\mbq_n, \omega_s)\xi(\mbk_m, \omega_s)}.
\end{align*}
Compared to the SNIS formulation of vanilla RFA \cref{eqn:rfa-snis}, we can express it as
\begin{align*}
    \mathsf{RFA}(\mbq_n, \mbK, \mbV) = \frac{\sum_{s=1}^S\frac{p_n(\omega_s)}{q(\omega_s)}f(\omega_s)}{\sum_{s=1}^S\frac{p_n(\omega_s)}{q(\omega_s)}} = \frac{\sum_{m=1}^M g_m(\mbomega)}{\sum_{m=1}^M h_m(\mbomega)}.
\end{align*}
We can think of each coefficient $\widehat{\mbbeta}_c(\mbomega)$ as computing the output of a localized RFA for each group $\mcP_c$. From this perspective, we can recast each coefficient $\widehat{\mbbeta}_c(\mbomega)$ as an SNIS estimator as well, which tries to estimate
\begin{align*}
\E[\omega \sim p_c(\omega)]{f_{c}(\omega)} = \sum_{m \in \mcP_c} \frac{\exp\left(\mbq_n^\top \mbk_m\right)}{\sum_{m' \in \mcP_c} \exp\left(\mbq_n^\top \mbk_{m'}\right)} \mbv_m  \numberthis\label{app:eqn:beta-as-snis}
\end{align*}
where
\begin{align*}
f_c(\omega) &\coloneqq \frac{\sum_{m \in \mcP_c}\xi(\mbq_n, \omega)\xi(\mbk_m, \omega)\mbv_m}{\sum_{m' \in \mcP_c}\xi(\mbq_n, \omega)\xi(\mbk_{m'}, \omega)},\\
p_c(\omega) &\coloneqq \frac{\mathcal{N}(\omega;0,\mathbf{I}) \sum_{m \in \mcP_c}\xi(\mbq_n,\omega)^\top  \xi(\mbk_{m}, \omega)}{\sum_{m' \in \mcP_c} \exp\left(\mbq_n^\top\mbk_{m'}\right)} \\
&= \sum_{m \in \mcP_c} \frac{\exp\left(\mbq_n^\top\mbk_{m}\right)}{\sum_{m' \in \mcP_c} \exp\left(\mbq_n^\top\mbk_{m'}\right)} \mathcal{N}(\omega; \mbq_n + \mbk_m, \mathbf{I}).
\end{align*}
This interpretation indicates that a good proposal distribution $q_c(\omega)$ should be specific to each subset $\mcP_c$. To get close to the true distribution $p_c(\omega)$ while keeping efficient computation, \citet{lara} suggests parameterizing the proposal distribution as 
\begin{align*}
    q_c(\omega) \coloneqq \mathcal{N}(\omega; \mu_c, \mathbf{I}) = \mathcal{N}(\omega; \widetilde{\mbq}_c + \widetilde{\mbk}_c, \mathbf{I}),\numberthis\label{app:eqn:proposal-distribution}
\end{align*}
where $\widetilde{\mbq}_c$ is calculated similarly to \cref{app:eqn:tilde-k}. We refer readers to \citet{lara} for more discussions about the parameterization choice of proposal distributions. We conduct further ablation studies to test the effect of proposal parameterizations in our proposed model, as shown in \cref{app:tb:ablation}. In particular, we found our model is robust to different parameterization approaches.

The essence in making the algorithm memory-efficient is to use only \emph{one} sample in calculating $\widehat{\mbbeta}_c(\mbomega)$. In this case, we have
\begin{align*}
\widehat{\mbbeta}_c(\mbomega) &= \frac{\sum_{m \in \mcP_c} g_m(\mbomega)}{\sum_{m \in \mcP_c} h_m(\mbomega)} \\
&= \frac{\frac{\mathcal{N}(\omega^c;0,\mathbf{I})}{Zq_c(\omega^c)}\sum_{m \in \mcP_c}\xi(\mbq_n, \omega^c)\xi(\mbk_m, \omega^c)\mbv_m}{\frac{\mathcal{N}(\omega^c;0,\mathbf{I})}{Zq_c(\omega^c)}\sum_{m \in \mcP_c}\xi(\mbq_n, \omega^c)\xi(\mbk_m, \omega^c)}\\
&= \frac{\frac{\mathcal{N}(\omega^c;0,\mathbf{I})}{Zq_c(\omega^c)}\xi(\mbq_n, \omega^c)\sum_{m \in \mcP_c}\xi(\mbk_m, \omega^c)\mbv_m}{\frac{\mathcal{N}(\omega^c;0,\mathbf{I})}{Zq_c(\omega^c)}\xi(\mbq_n, \omega^c)\sum_{m \in \mcP_c}\xi(\mbk_m, \omega^c)}\\
&= \frac{\sum_{m \in \mcP_c}\xi(\mbk_m, \omega^c)\mbv_m}{\sum_{m \in \mcP_c}\xi(\mbk_m, \omega^c)}, \quad\quad w^c \sim q_c(\omega).
\end{align*}
Since this degenerated formulation eliminates the dependence on individual queries $\mbq_n$, we could pre-compute $\widehat{\mbbeta}_c(\mbomega)$ for each $\mcP_c$, and then re-uses them for each query, which takes up $\mathcal{O}(Cd)$ memory. If multiple samples are used instead, the influence of queries needs to be explicitly taken into account and thus we need to compute a distinct $\widehat{\mbbeta}_c(\mbomega)$ for each query, leading to $\mathcal{O}(NCd)$ memory usage, which incurs a significant compute overhead.  

On the other hand, if we set $C = 1$, that is, using a shared $\widehat{\mbbeta}_c(\mbomega)$ over all $m \notin E$, our approach does not suffer from this issue, since the memory usage is at most $\mathcal{O}(Nd)$. To investigate the effect of using larger $C$ or increasing the number of samples, we conduct an ablative analysis as in \cref{app:tb:ablation}, and find that 1) when $C = 1$, the performance degrades a lot when using one sample, which can be largely improved by adopting more samples; while when $C > 1$, our partitioning strategy dominates and increasing the number of samples only improves performance marginally. This also validates the effectiveness of adopting a finer-grained treatment over control variates.

\paragraph{Partitioning Strategy.}
\model significantly improves random feature approximation by trying to locally estimate each subset of tokens, which is a much easier task than approximating the whole sequence as in previous RFA methods. To achieve this, \model partitions the whole token sequence into multiple subsets according to the current query position $n$, which is denoted by $\{E^n, \mcP_1^n, \mcP_2^n, \dots, \mcP_C^n\}_{n=1}^N$.\footnote{Here we add the superscript $n$ to reflect the dependence on query position $n$.} For elements in subset $E^n$, we optimize the control variate coefficient to give an exact estimate for each single token $m \in E^n$. In addition, we impose T5-style relative positional encoding \citep{raffel2020t5} over elements in $E^n$. While for some other subset $\mcP_c$, we employ the shared coefficient to approximate all tokens belonging to $\mcP_c$.
We assume all $E^1, \dots, E^N$ are of the same cardinality $K$, and $|\mcP_c^n|$ is the same for any $c = 1,\dots,C$ and $n = 1,\dots,N$.

The partition strategy $\{E^n, \mcP_1^n, \mcP_2^n, \dots, \mcP_C^n\}_{n=1}^N$ is decided based on a simple criterion: 
\begin{itemizesquish}
    \item for $E^n$, it contains $K$ local neighbors with respect to each query $n$. To further simplify implementation and reduce memory usage, we chunk the whole sequence into contiguous blocks of size $K$, and all adjacent queries belonging to the same block will share this block as the subset $E^n$;
    \item as for $\mcP_1^n, \mcP_2^n, \dots, \mcP_C^n$, we follow a similar treatment by splitting the complement $[M] \setminus E^n$ into $C$ contiguous chunks of the same size. For ease of implementation, we simply partition the whole index set $[M]$ into multiple groups instead of $[M] \setminus E^n$, which circumvents the overload for explicitly performing set difference operations in practical implementation. Although this leads to extra approximation error, this amounts to putting more attention weights on tokens belonging to the subset $E$ and we found this approximation does not lead to performance degradation (\cref{app:tb:ablation}). 
\end{itemizesquish}

\begin{algorithm}[t]
\caption{Pseudo-code for \model}
\label{alg:eva}
\begin{algorithmic}
\State \textbf{Input:} the randomized mapping $\xi(\cdot,\cdot)$, queries $\mbQ \coloneqq \{\mbq_n\}_{n=1}^N$, keys $\mbK \coloneqq \{\mbk_m\}_{m=1}^M$, values $\mbV \coloneqq \{\mbv_m\}_{m=1}^M$ and partitions of the sequence $\{E^n, \mcP_1^n, \mcP_2^n, \dots, \mcP_C^n\}_{n=1}^N$;
\State \textbf{Output:} attention output $\mbY \coloneqq \{\mby_n\}_{n=1}^N$;
\State
\For{$c = 1, 2, \dots, C$}
\State Compute $\widetilde{\mbk}_c$ according to \cref{app:eqn:tilde-k};
\State
\State Compute $q_c(\omega)$ according to \cref{app:eqn:proposal-distribution};
\State Sample $\omega_c \sim q_c(\omega)$; \Comment{During inference, simply set $\omega_c = \E[q_c(\omega)]{\omega}$}
\State Compute $\widehat{\mbbeta}_c(\mbomega) = \sum_{m \in \mcP_c^n} \frac{\xi(\mbk_{m},\omega_c)}{\sum_{m \in \mcP_c^n} \xi(\mbk_{m},\omega_c)}\mbv_{m}$;
\EndFor
\For{$n = 1, 2, \dots, N$}
\State Compute $\mathcal{S} = \sum_{m \in E^n} \exp\left(\mbq_n^\top \mbk_m\right) \mbv_m$; \Comment{Compute attention scores in the selected subset $E$}
\State Compute $\mathcal{R} = \sum_{c = 1}^C \exp\left(\mbq_n^\top \widetilde{\mbk}_c\right) \widehat{\mbbeta}_c(\mbomega)$; \Comment{Compute approx. expected control variates}
\State Compute $Z = \sum_{m \in E^n} \exp\left(\mbq_n^\top \mbk_m\right) + \sum_{c = 1}^C \exp\left(\mbq_n^\top \widetilde{\mbk}_c\right)$;
\State Compute $\mby_{n} = \left(\mathcal{S} + \mathcal{R}\right)/Z$;
\EndFor
\State {\bfseries Return} $\mbY \coloneqq [\mby_1, \dots, \mby_N]$.
\end{algorithmic}
\end{algorithm}

\section{A Causal Variant of EVA}\label{app:sec:causal-eva}
In this section, we describe the causal variant of \model, where each query can only attend to historical tokens. Thanks to the partitioning scheme, all future information with respect to the current query token can be masked conveniently. Following the formulation of \model, we partition the whole sequence into $C + 1$ subsets $\{E^n, \mcP_1^n, \mcP_2^n, \dots, \mcP_C^n\}$ with respect to each query $\mbq_n$. To fulfill the causal requirement, we design two different types of masking matrices to deal with both $E^n$ and $\{\mcP_c^n\}_{c=1}^C$ respectively.
\begin{itemizesquish}
    \item For $E^n$, we adopt a single lower-triangular matrix with shape $K \times K$ (recall that each set $E^n$ is of size $K$) to mask future tokens locally, similar to the case of standard decoder softmax attention. Future tokens that do not belong to $E^n$ are handled by masking functions for $\{\mcP_c^n\}_{c=1}^C$, as described below.
    \item For $\{\mcP_c^n\}_{c=1}^C$, we make use of the fact $n \in E^n$. Since any $\mcP_c^n$ and $E^n$ are disjoint, we only need to mask all subsets $\mcP_c^n$ that appear after $E^n$. This amounts to first allocating a lower-triangular matrix with shape $C \times C$, and then conducting future masking at a subset level.
\end{itemizesquish}
The pseudo-code for the causal variant of \model is listed in \cref{alg:causal-eva}. 

\begin{algorithm}[t]
\caption{Pseudo-code for Causal \model}
\label{alg:causal-eva}
\begin{algorithmic}
\State \textbf{Input:} the randomized mapping $\xi(\cdot,\cdot)$, queries $\mbQ \coloneqq \{\mbq_n\}_{n=1}^N$, keys $\mbK \coloneqq \{\mbk_m\}_{m=1}^M$, values $\mbV \coloneqq \{\mbv_m\}_{m=1}^M$, and partitions of the sequence $\{E^n, \mcP_1^n, \mcP_2^n, \dots, \mcP_C^n\}_{n=1}^N$;
\State \textbf{Output:} attention output $\mbY \coloneqq \{\mby_n\}_{n=1}^N$;
\State
\For{$c = 1, 2, \dots, C$}
\State Compute $\widetilde{\mbk}_c$ according to \cref{app:eqn:tilde-k};
\State
\State Compute $q_c(\omega)$ according to \cref{app:eqn:proposal-distribution};
\State Sample $\omega_c \sim q_c(\omega)$; \Comment{During inference, simply set $\omega_c = \E[q_c(\omega)]{\omega}$}
\State Compute $\widehat{\mbbeta}_c(\mbomega) = \sum_{m \in \mcP_c^n} \frac{\xi(\mbk_{m},\omega_c)}{\sum_{m \in \mcP_c^n} \xi(\mbk_{m},\omega_c)}\mbv_{m}$;
\EndFor
\State \textcolor{keywords}{Let $K \gets |E^N|$;} \Comment{we assume all $E^n$ are the same in size}
\State \textcolor{keywords}{Initialize $\mbM^E \in \{0,1\}^{K \times K}$ such that $\mbM^E_{i,j} = \indicator_{i \leq j}$;} \Comment{Intra-$E$ masking matrix}
\State \textcolor{keywords}{Initialize $\mbM^{\mcP} \in \{0,1\}^{C \times C}$ such that $\mbM^{\mcP}_{c,t} = \indicator_{c \leq t}$;} \Comment{Inter-$\mcP$ masking matrix}
\For{$n = 1, 2, \dots, N$}
\State \textcolor{keywords}{Find index $t$ such that $\mcP_t^n$ is the most recent chunk on the left of $E$;}
\State \textcolor{keywords}{Let $b^n \gets \min_i \{i : i\in E^n\}$;} \Comment{The least position within $E^n$; used for shifting token indices.}
\State
\State \(\triangleright\) The same masking matrix $\mbM^E$ can be reused across $n$ via shifting token positions by $b^n$.
\State Compute $\mathcal{S} = \sum_{m \in E^n} \textcolor{keywords}{\mbM^E_{m-b^n,n-b^n}}\exp\left(\mbq_n^\top \mbk_m\right) \mbv_m$;
\State Compute $\mathcal{R} = \sum_{c = 1}^C \textcolor{keywords}{\mbM^{\mcP}_{c,t}}\exp\left(\mbq_n^\top \widetilde{\mbk}_c\right)  \widehat{\mbbeta}_c(\mbomega)$; 
\State Compute $Z = \sum_{m \in E^n} \textcolor{keywords}{\mbM^E_{m-b^n,n-b^n}}\exp\left(\mbq_n^\top \mbk_m\right) + \sum_{c = 1}^C \textcolor{keywords}{\mbM^{\mcP}_{c,t}}\exp\left(\mbq_n^\top \widetilde{\mbk}_c\right)$;
\State Compute $\mby_{n} = \left(\mathcal{S} + \mathcal{R}\right)/Z$;
\EndFor
\State {\bfseries Return} $\mbY \coloneqq [\mby_1, \dots, \mby_N]$.
\end{algorithmic}
\end{algorithm}

\section{Experimental Details}\label{app:sec:experiment-details}
All of our experiments are conducted with at most 16 NVIDIA V100 GPUs.
\subsection{Efficient Attention Baselines}
We compare our proposed attention mechanism \model against various baselines:
\begin{itemizesquish}
    \item Performer \citep{choromanski2021rethinking}, which uses the plain random features to approximate softmax attention;
    \item LARA \citep{lara}, an advanced RF approximation that makes use of multiple adaptive proposals to construct the SNIS estimator;
    \item Linformer \citep{wang2020linformer}, a low-rank approximation that uses a learnable matrix to project the key-value sequence into a shorter one;
    \item Nystr\"omformer \citep{xiong2021nystromformer}, a low-rank approximation that adopts the Nystr\"om method to approximate softmax attention map with a sub-sampled matrix;
    \item Local attention \citep{child2019generating}, a simple sparse approximation that splits the whole sequence into multiple blocks and only allows the query to attend to tokens within the same block; 
    \item Reformer \citep{Kitaev2020reformer}, a sparse approximation where hash functions are used to adaptively distribute sequence tokens into multiple buckets, and each token can only attend to tokens within the same bucket;
    \item Scatterbrain \citep{chen2021scatterbrain}, an approach that combines Performer and sparse attention. The details can be found in \cref{app:sec:connections}. Here we implement the sparse module as a simple local attention to ensure a fair comparison;
    \item Combiner \citep{ren2021combiner}, a probabilistic approach that constructs a structured factorization over the softmax probability distribution via a sparse mechanism. Combiner allows both direct and indirect calculations of conditional probabilities, where the direct probability is implemented as the sparse mechanism while the indirect probability is implemented through a local abstraction over a group of tokens. Similarly, we implement the sparse mechanism as a simple local attention, which corresponds to the Combiner-Fixed variant \citep{ren2021combiner};
    \item Transformer-LS, or Long-Short \citep{zhu2021long-short}, which is proposed to model long-term and short-term dependencies  via low-rank structures and local attention respectively. The low-rank structure is defined as an input-dependent weight matrix that compresses the sequence into a shorter one; while the local attention is defined similarly as above.
\end{itemizesquish}

Note that for all mechanisms that involve a local attention, we split the sequence into \emph{non-overlapping} blocks (or 2D windows in terms of images) and each query can only attend to tokens within the same block. We also use the relative positional embedding \citep{t5,liu2021swin} within the local attention computation. Unlike Transformer-LS \citep{zhu2021long-short} that allows each query to attend to multiple blocks, we do not use this extension as we find greatly increases memory consumption, although it does improve the model performance. 

\subsection{Image Classification}\label{app:ssec:image-classification-details}
Through the experiments on image classification, we consider four different vision transformer (ViT) architectures: 
\begin{itemizesquish}
    \item DeiT-Tiny \citep{touvron21adeit}, which maintains the sequence length as 196 across all transformer layers. For the particular tiny variant, the number of transformer layers is set to 12, the embedding dimension is set to 196 and the number of heads is 3;
    \item DeiT-Small \citep{touvron21adeit}, which scales the embedding dimension and number of attention heads in DeiT-Tiny up to 384 and 6, respectively;
    \item DeiT-Tiny-784, where the architecture is the same as DeiT-Tiny but the patch size in the tokenization step is decreased from 16 to 8. This effectively increases the sequence length from 196 to 784, which we found consistently improves predictive accuracy at the cost of significantly increased time and memory consumption. Under this setting, we also see clearer differences among these attention variants and it helps better evaluate the ability of different attention models to learn visual representations;
    \item PVT-v2-B3 \citep{pvtv2}, a pyramidal transformer architecture that processes much longer 
    token sequences at early layers and progressively reduces the sequence length to form a hierarchical structure. It patchifies input images into 3136 ($56 \times 56$) tokens, and then processes the sequence through 4 stages. Each stage contains several transformer layers and a down-sampling operation, which reduces the sequence length by a factor of 4 and increases the embedding dimension by $2\times$. Due to the prohibitively long sequences initially, PVT applies an additional down-sampling module on input sequences to obtain key and value vectors, which are then passed through a normal softmax attention mechanism. To evaluate different RF approximations, we remove the down-sampling operation and directly operate on the original sequence length, which results in much fewer model parameters than vanilla PVT-v2-B3. We refer readers to \citet{pvtv2} for detailed architecture configurations.
\end{itemizesquish}

For training, we do not use the [CLS] token for classification \citep{touvron21adeit}; instead, we pool over the output of the last transformer layer to extract features and feed them into the classifier head. We followed the same protocol to train all model variants. Closely following DeiT \citet{touvron21adeit}, we employ the AdamW \citep{adamw} optimizer to train models for 300 epochs, where the number of warm-up epochs is 10, the learning rate is 0.001 with cosine learning rate decay \citep{cos-lr}, and batch size is set to 1024. The adopted augmentation and regularization are the same as DeiT, except that we remove repeated augmentation \citep{repeat-augment} in DeiT models as it often slows down convergence, as also observed in previous studies \citep{xiao2021early}.\footnote{we retain the repeated augmentation technique in training PVT to be consistent with the original training protocol in \citet{pvtv2}.} The specific configurations of each attention mechanism on DeiT-Tiny-784 are listed in \cref{app:tb:attention-spec-vit}. The hyper-parameter setup for each attention variant follows previous practices \citep{pvt,pvtv2,lara} closely to ensure a similar computational cost.

\begin{table}[t]
	\caption{Our hyper-parameter configuration for different attention mechanisms on DeiT-Tiny-784.}
	\label{app:tb:attention-spec-vit}
	\centering
	\resizebox{0.8\columnwidth}{!}{    
		\begin{tabular}{l | l | c}
			\toprule[.1em]
			{Attention} & \multicolumn{2}{c}{Hyper-parameter configuration on image classification} \\
    		\midrule
    		\midrule
    		Local attention & Window size & 49\\
    		\midrule
    		\multirow{2}{*}{Scatterbrain \citep{Kitaev2020reformer}}  
    		& umber of random feature samples & 96 \\
    		& Local attention window size & 49\\
    		\midrule
    		\multirow{1}{*}{Nystr\"omformer \citep{xiong2021nystromformer}}  
    		& Number of landmarks & 49 \\
    		\midrule
    		\multirow{2}{*}{Performer \citep{choromanski2021rethinking}}  
    		& Number of random feature samples & 128 \\
    		& Type of random feature       & Positive\\
    		\midrule
    		\multirow{3}{*}{Combiner \citep{ren2021combiner}}  
    		& Mode & Fixed \\
    		& Span size & 49\\
    		& Conditional distribution parameterization & DeepSets-Max\\
    		\midrule
    		\multirow{2}{*}{Transformer-LS \citep{zhu2021long-short}}  
    		& Dynamic projection dimension & 16\\
    		& Local window size & 49\\
    		\midrule
    		\multirow{2}{*}{\model (Ours)}  
    		& Number of partitioned groups ($C$) & 49 \\
    		& Size of $E$ & 49 \\
    		\midrule
			\bottomrule[.1em]
	\end{tabular}}  
\end{table}

\paragraph{Comparison to State-of-the-Art Model Architectures.}
We also compare our model against recent state-of-the-art (SOTA) model architectures with similar parameter sizes on \imagenet benchmark. As reported in \cref{app:tb:imagenet-sota}, we observe that PVT-v2 \citep{pvtv2} with \model greatly improves the predictive accuracy and performs competitively with recent SOTA architectures while using fewer parameters and FLOPs.

\begin{table}[t]
	\caption{Results on \imagenet dataset compared with SOTA model architectures.}
	\label{app:tb:imagenet-sota}
	\centering
	\resizebox{0.7\columnwidth}{!}{    
		\begin{tabular}{l | c | c | c}
			\toprule[.1em]
			Model & \# Param. & FLOPs & {Top-1 Acc.}\\
    		\midrule
    		\midrule
            PVT-v1-M \citep{pvt} & 44M & 6.7G & 81.2\\
            RegNetY-8G \citep{regnet} & 39M & 8.0G & 81.7\\
            CvT-21 \citep{cvt} & 32M & 7.1G & 82.5\\
            SOFT-M \citep{lu2021soft} & 45M & 7.2G & 82.9\\
            RegNetY-16G \citep{regnet} & 84M & 16.0G & 82.9 \\
            UniFormer-S \citep{li2022uniformer} & 22M & 3.6G & 82.9 \\
            Swin-S \citep{liu2021swin} & 50M & 8.7G & 83.0\\
            Swin-B \citep{liu2021swin} & 88M & 15.4G & 83.3 \\
            RegionViT-M \citep{regionvit} & 42M & 7.9G & 83.4\\
            ViL-M \citep{vil} & 40M & 9.1G & 83.5\\
            Focal-S \citep{focal} & 51M & 9.1G & 83.5\\
            PVT-v2-b3 + LARA \citep{lara} & 40M & 7.7G & 83.6\\
            MaxViT-T \citep{tu2022maxvit} & 31M & 5.6G &  83.6 \\
            UniFormer-B \citep{li2022uniformer} & 50M & 8.3G & 83.9 \\
            \midrule
            PVT-v2-b3 \citep{pvtv2} & 45M & 6.9G & 83.1\\
            PVT-v2-b3 + \model & 36M & 7.4G & 83.7\\
			\bottomrule[.1em]
	\end{tabular}}  
\end{table}

\subsection{Machine Translation and Language Modeling}
\label{app:ssec:nlp-details}
Our implementation for all language tasks is based on FairSeq toolkit \citep{ott-etal-2019-fairseq}. To compare different methods, we report BLEU scores on the test set as the main metric for MT and perplexity for both Autoregressive LM and MLM tasks. For the hyper-parameters $|E|$ and $C$ in \model, we set $|E| = 2C$ by default, as we find that this choice attains a good trade-off between performance and computational costs across various tasks; while for $C$, it is determined based on previous practice for each task. Here we provide the detailed experimental protocol for each task.

\paragraph{Masked Language Modeling.}
Following the standard pretraining practice as in RoBERTa \citep{liu2019roberta}, in MLM, we aim to reconstruct a subset of tokens in the input sequence that are randomly masked out, which is the core element of BERT-style natural language pretraining \citep{devlin-etal-2019-bert}. This setting allows us to investigate the generalization ability of our model on larger model sizes and much more data. The task performance is measured with validation perplexity, which reflects how well the model fits the pretraining corpus and also exhibits good correlations with downstream task metrics. For the used corpus \book, we randomly select 100 books without replacement for the validation split, similar to the setup in C4 dataset \citep{t5}. For the model, we use the RoBERTa-base architecture \citep{liu2019roberta}, where all the layer normalization operations \citep{ba2016layer} are placed before attention and FFN blocks (i.e., we adopt the pre-norm architecture), which leads to much more stable training for efficient attention mechanisms. We replace all softmax attention with \model to test its effectiveness. The training setting and attention-specific parameters, which follow previous studies \citep{xiong2021simple} to ensure a similar computational cost, can be found in \cref{app:tb:mlm-hp} and \cref{app:tb:attention-spec-mlm} respectively. 
\begin{table}[t]
	\caption{Our hyper-parameter configuration for Masked Language Modeling (MLM).}
	\label{app:tb:mlm-hp}
	\centering
	\resizebox{0.5\columnwidth}{!}{    
		\begin{tabular}{l | c}
			\toprule[.1em]
			{Hyper-parameter} & {MLM} \\
    		\midrule
    		Number of transformer encoder layers & 12\\
    		Hidden size & 768 \\
    		hidden size in FFN & 3072 \\
    		Number of attention heads & 12 \\
    		Batch size & 256 \\
    		Sequence length & $\{2048, 4096\}$ \\
    		Number of training steps & 200K \\
    		Number of warm-up steps & 5K \\
    		Weight decay rate & 0.01 \\
    		Peak Learning Rate & 1e-4 \\
    		Learning rate decay & Linear \\
    		Optimizer & Adam \\
    		Adam $\epsilon$ & 1e-6 \\
    		Adam $(\beta_1, \beta_2)$ & (0.9, 0.98) \\
    		Gradient Clipping & 0.0 \\
    		Dropout & 0.1 \\
    		Attention dropout (if applicable) & 0.0 \\
			\bottomrule[.1em]
	\end{tabular}}  
\end{table}

\begin{table}[t]
	\caption{Our hyper-parameter configuration for different attention mechanisms on MLM task. * We used the exact positive random feature map \citep{choromanski2021rethinking} in our preliminary experiments. However, it failed to converge and exhibited substantial training instability. Therefore, we replace the positive random feature with a simple ReLU kernel function for MLM experiments, which yields better training performance.}
	\label{app:tb:attention-spec-mlm}
	\centering
	\resizebox{0.8\columnwidth}{!}{    
		\begin{tabular}{l | l | c}
			\toprule[.1em]
			{Attention} & \multicolumn{2}{c}{Hyper-parameter configuration on MLM} \\
    		\midrule
    		\midrule
    		Local attention & Window size & 256\\
    		\midrule
    		Linformer \citep{wang2020linformer} & Projected dimension & 256 \\
    		\midrule
    		\multirow{2}{*}{Reformer \citep{Kitaev2020reformer}}  
    		& Number of hashes & 4 \\
    		& Chunk size       & 64 \\
    		\midrule
    		\multirow{2}{*}{Performer \citep{choromanski2021rethinking}}  
    		& Number of random feature samples & 256 \\
    		& Type of random feature       & ReLU\textsuperscript{*} \\
    		\midrule
    		\multirow{1}{*}{LARA \citep{lara}}  
    		& Number of landmarks & 256 \\
    		\midrule
    		\multirow{3}{*}{Combiner \citep{ren2021combiner}}  
    		& Mode & Fixed \\
    		& Span size & 256\\
    		& Conditional distribution parameterization & DeepSets-Max\\
    		\midrule
    		\multirow{2}{*}{Transformer-LS \citep{zhu2021long-short}}  
    		& Dynamic projection dimension & 128\\
    		& Local window size & 256\\
    		\midrule
    		\multirow{2}{*}{\model (Ours)}  
    		& Number of partitioned groups ($C$) & 128 \\
    		& Size of $E$ & 256 \\
    		\midrule
			\bottomrule[.1em]
	\end{tabular}}  
\end{table}

\paragraph{Machine Translation.}
We follow \citet{ott-etal-2018-scaling} to process \wmt dataset, resulting in around 4.5M/3K/3K English-German sentence pairs for training/validation/testing splits, respectively, and a shared vocabulary is obtained between the source and target language of around 32K BPE types. The architecture and training specifics closely follow \citet{vaswani2017attention}, as listed in \cref{app:tb:mt-hp}. We follow the previous protocol \citet{lara} by replacing all encoder self-attention blocks in the encoder-decoder Transformer with \model. For \model, we find it beneficial to introduce an overlapping variant of $E$, where we allow $E$ to be overlapped with each other. Following previous practice in the context of local attention \citep{xiong2021simple}, $E$ not only contains all elements within the designated chunk but also additionally includes half the tokens in its \emph{neighboring} chunks. As a result, \model-$32$ corresponds to $|E|=32$ with a contiguous chunk size of 16. 
During inference, we follow the same setup as \citet{lara} and average the last 10 model checkpoints to obtain the final model parameters. We apply beam search with size 4, length penalty 0.6, and compound split post-processing. Since the input sequences in \wmt benchmark are much shorter than the other tasks considered in this paper (with an average sequence length of around 25 tokens), we start with $C = 8$, $|E| = 16$ and gradually increase $|E|$ to test the translation performance, similar to the setup in \citet{ma2021luna, lara}. Note that increasing $C$ also leads to better translation quality, although we found the performance gain is slightly less effective than that of increasing $|E|$ (c.f. \cref{tb:ablation-imagenet,tb:ablation-mlm}).
\begin{table}[t]
	\caption{Our hyper-parameter configuration for machine translation.}
	\label{app:tb:mt-hp}
	\centering
	\resizebox{0.6\columnwidth}{!}{    
		\begin{tabular}{l | c}
			\toprule[.1em]
			{Hyper-parameter} & {Machine Translation} \\
    		\midrule
    		Number of transformer encoder layers & 6\\
                Number of transformer decoder layers & 6\\
    		Hidden size & 512 \\
    		hidden size in FFN & 2048 \\
    		Number of attention heads & 8 \\
    	    Maximum number of tokens in a batch & 32768 \\
    		Number of training steps & 300K \\
    		Number of warm-up steps & 6K \\
    		Weight decay rate & 0.0 \\
    		Peak Learning Rate & 0.0007 \\
                Label Smoothing & 0.1 \\
    		Learning rate decay & Inverse square root \\
    		Optimizer & Adam \\
    		Adam $\epsilon$ & 1e-6 \\
    		Adam $(\beta_1, \beta_2)$ & (0.9, 0.98) \\
    		Gradient Clipping & 5.0 \\
    		Dropout & 0.1 \\
    		Attention dropout (if applicable) & 0.1 \\
			\bottomrule[.1em]
	\end{tabular}}  
\end{table}

\paragraph{Autoregressive Language Modeling.}
We consider \wikitext benchmark in this task, which consists of around 103M/218K/246K tokens for training/validation/testing splits, respectively. 
We adopt the vanilla transformer decoder architecture \citep{vaswani2017attention}, replace all decoder self-attention modules in the Transformer with the causal \model mechanism, and evaluate \model under two different setups: 1) a standard 16-layer Transformer LM (with model sizes of around $247$M) as in \citet{baevski2018adaptive}, and 2) a larger 32-layer Transformer LM (with model sizes of around $450$M) as in \citet{kasai2021t2r}. We follow their hyper-parameter settings to train all models, where the corresponding configurations are listed in \cref{app:tb:autoregressive-lm-hp}. \footnote{The setup in \citet{baevski2018adaptive} can be found in the corresponding Fairseq training script: \url{https://github.com/pytorch/fairseq/blob/master/examples/language_model/README.adaptive_inputs.md}.} The vocabulary size is 267,744 with adaptive input embeddings \citep{baevski2018adaptive}.
During training, we set the sequence length to 512 and evaluate the validation/test PPL with various context window sizes in $\{256, 480\}$, aligning with previous work \citep{baevski2018adaptive, kasai2021t2r}. For other random feature baselines, unfortunately, we failed to fully replicate their results as reported in \citet{kasai2021t2r}, where RFA in our implementation achieved a test perplexity of 29.0 even under a 449M Transformer model. For \model, we set $|E| = 128$ and $C = 64$ by default for both 16-layer and 32-layer settings, ensuring similar computational cost to previous work that also evaluates random feature methods (typically with 128 or 256 random-feature dimension size) on \wikitext language modeling task \citep{schlag21a,kasai2021t2r}.

\begin{table}[t]
	\caption{Our hyper-parameter configuration for autoregressive language modeling.}
	\label{app:tb:autoregressive-lm-hp}
	\centering
	\resizebox{0.9\columnwidth}{!}{    
		\begin{tabular}{l | c | c}
			\toprule[.1em]
			{Hyper-parameter} & {LM in \citet{baevski2018adaptive}} & {LM in \citet{kasai2021t2r}} \\
    		\midrule
                Number of transformer decoder layers & 16 & 32 \\
    		Hidden size & 1024 & 1024\\
    		hidden size in FFN & 4096 & 4096\\
    		Number of attention heads & 8 & 8\\
    	    Number of tokens in a batch & 65536 & 65536\\
    		Number of training steps & 286K & 286K\\
    		Number of warm-up steps & 16K & 16K\\
                Weight decay rate & 0.0 & 0.0 \\
    		Peak Learning Rate & 1.0 & 1.0\\
    		Learning rate decay & cosine & cosine\\
    		Optimizer & nag & nag\\
    		Gradient Clipping & 0.1 & 0.1\\
    		Dropout & 0.3 & 0.3\\
                LayerDrop & -- & 0.2\\
    		Attention dropout & 0.1 & 0.1\\
			\bottomrule[.1em]
	\end{tabular}}  
\end{table}

\subsection{Experimental Settings of Efficiency Comparison}\label{app:ssec:time-mem-details}
For the simulation experiment conducted in \cref{ssec:experimental-analysis}, we adopt the same transformer architecture across all attention variants. In particular, it uses 8 transformer layers, 192 embedding dimensions, and 2 attention heads so that longer sequences can fit into our devices. The batch size is set to 64 across 8 V100 GPUs, and the statistics are computed by averaging the results of 30 runs. Besides, in our ablation study, the efficiency metrics reported in \cref{tb:ablation-imagenet} and \cref{tb:ablation-mlm} are evaluated under the same setup used during training.

\paragraph{Remark on Modeling Short Sequences.}
Unfortunately, similar to most previous efficient attention baselines, \model also runs slower than softmax attention under shorter sequences (e.g., length of 128 or 256), but it soon catches up in running speed, and the reduction of memory consumption is still significant. Besides, in short-sequence settings (such as the case of DeiT-Tiny/Small with sequences of 196 tokens), \model often performs on par with or better than conventional softmax attention (see \cref{tb:deit-pvt}), whereas most previous attention variants usually perform much worse. This implies \model can achieve a better trade-off between efficiency and quality: for short sequences, \model is possible to achieve stronger performance competitive with softmax attention (despite in longer running time); while for long sequences, \model can be run much faster with less memory.

\paragraph{Comparison to Memory-efficient Attention Mechanisms.}
In this section, we conduct an empirical efficiency comparison between efficient approximate attention methods and FlashAttention, one of the memory-efficient attention mechanisms \citep{rabe2021self,dao2022flashattention} with optimized memory accesses. FlashAttention computes the exact softmax attention in an online manner without materializing the full attention matrix, achieving linear memory complexity with respect to sequence lengths; besides, both runtime and memory usage are further improved by minimizing IO accesses. We benchmark different attention modules on one NVIDIA GeForce RTX 3090 GPU, where we measure the memory usage and runtime of running a single attention block, consisting of 8 attention heads with 512 embedding dimension size, for both a forward and backward pass. As shown in \cref{app:fig:additional_time_mem}, we observe that FlashAttention achieves significant memory usage reduction for softmax attention approximation and even consumes much less memory than all considered approximate baselines under all sequence lengths. In terms of runtime, we notice that FlashAttention runs faster than most attention baselines under sequence lengths less than 2048 despite scaling quadratically, but \model, along with other more efficient approximate variants, begin to catch up at longer sequence lengths. This implies that the quadratic computational costs of softmax attention still bottleneck its runtime performance, aligning with one of the main findings in \citet{dao2022flashattention}. According to this empirical study, we observe that FlashAttention offers a general and effective technique to speed up softmax attention; since many approximate variants (including \model) exhibit a similar formulation to softmax attention (e.g., \cref{eqn:practical-eva}), we expect they can also benefit from the optimized online softmax calculation technique and memory accesses of FlashAttention \citep{dao2022flashattention}.

\begin{figure}[tb]
\centering
\begin{subfigure}[b]{0.48\textwidth} 
  \centering
  {\includegraphics[width=\textwidth]{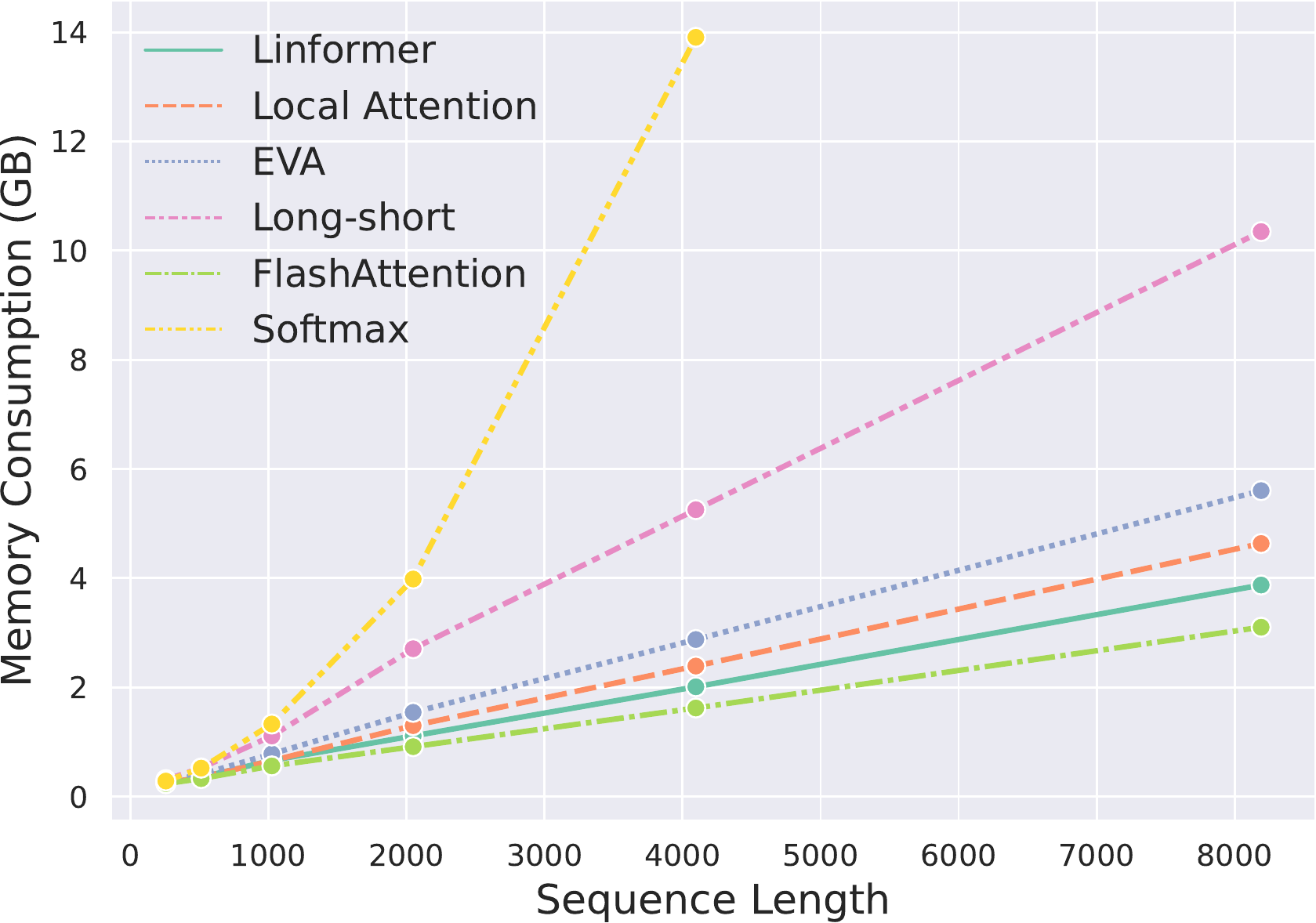}}
  \caption{Memory consumption.}\label{app:fig:additional_mem}
\end{subfigure}
\hfill
\begin{subfigure}[b]{0.48\textwidth} 
  \centering
  {\includegraphics[width=\textwidth]{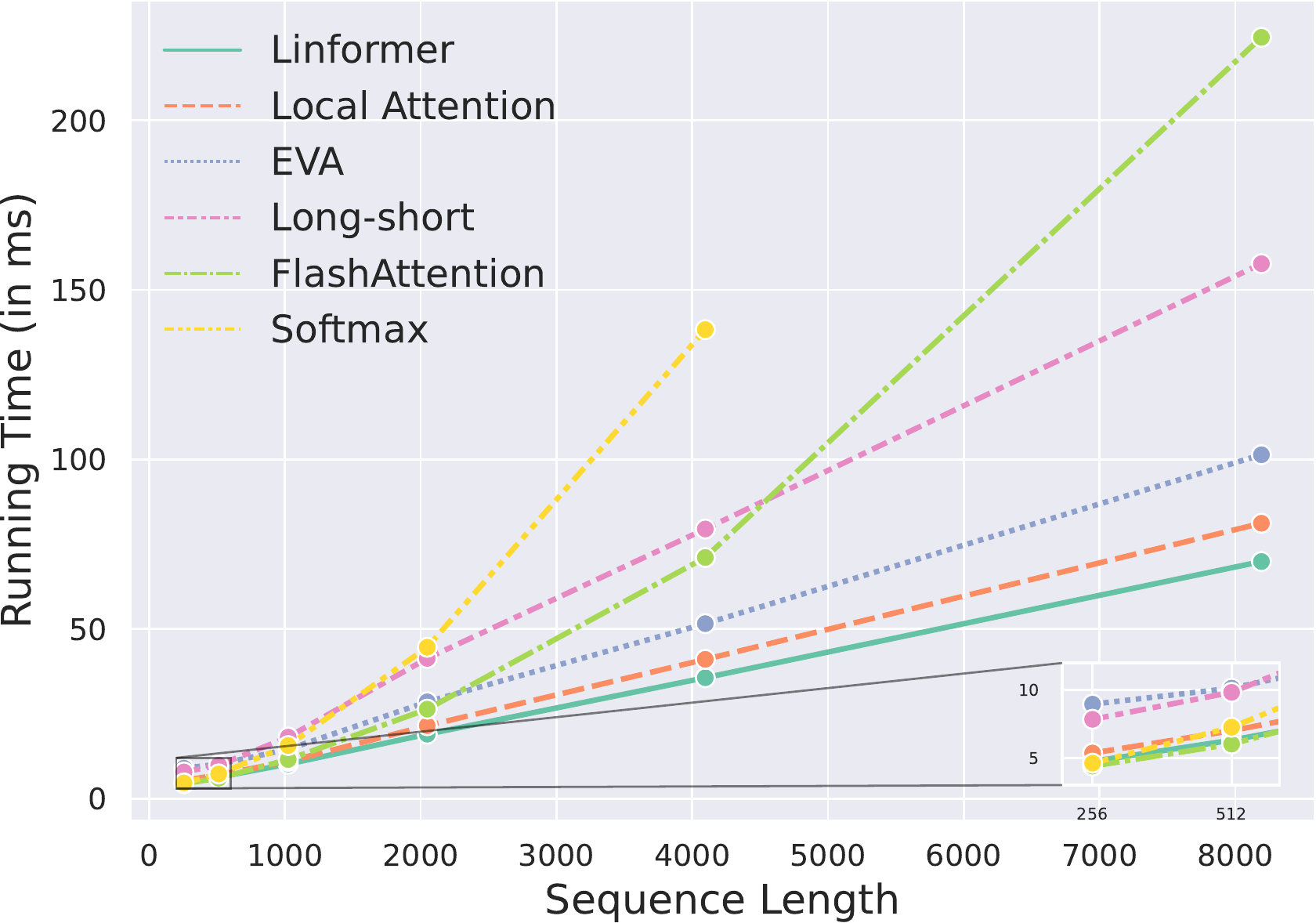}}
  \caption{Running time.}\label{app:fig:additional_time}
\end{subfigure}
\caption{
\textbf{Left} and \textbf{right}: Additional empirical memory consumption and running time comparison for different attention mechanisms under various sequence lengths.}
\label{app:fig:additional_time_mem}
\vspace{-0.1in}
\end{figure}

\section{Experiments on Long Range Arena}\label{app:sec:exp:lra}
Long Range Arena \citep[LRA;][]{tay2021long} is a lightweight benchmark that assesses the ability of efficient attention methods to model long sequences in diverse domains. We follow the same hyper-parameter setup as \citet{xiong2021nystromformer} to re-evaluate all attention baselines and report the comparison in \cref{tb:lra}. We observe that \model largely improves previous RFA methods such as Performer \citep{choromanski2021rethinking} and LARA \citep{lara}, and performs competitively with full softmax attention. Notably, \model even achieves better average results over all tasks, with higher accuracy on Image and Pathfinder benchmarks, suggesting its capability of capturing long-term dependencies. For \lra benchmark, we set all attention-specific hyper-parameters to 128 (e.g., the number of landmarks in Nystr\"omformer \citep{xiong2021nystromformer} and LARA \citep{lara}, the window size in local attention and Combiner \citep{ren2021combiner}, etc.). We set $|E| = 128$ and $C = 64$ by default for \model without any further tuning and find this setup works well.

\begin{table}[t]
	\caption{Classification accuracy (\%) on \lra benchmark with different efficient attention mechanisms.}
	\label{tb:lra}
	\centering
	\resizebox{0.7\columnwidth}{!}{    
		\begin{tabular}{lcccccc}
            \toprule[.1em] 
            Model & ListOps & Text & Retrieval & Image & Pathfinder & Avg. \\
            \midrule 
            Softmax         & \textbf{38.66} & 64.91 & 80.70 & 40.61 & 68.29 & 58.63 \\
            Linformer       & 38.21 & 53.91 & 77.66 & 39.40 & 66.44 & 55.12 \\
            Performer       & 29.84 & \textbf{65.30} & 77.70 & 38.29 & 66.39 & 55.50 \\
            Reformer        & 27.12 & 63.90 & 78.08 & 42.40 & 51.90 & 52.69 \\
            Scatterbrain    & 38.21 & 64.04 & 77.83 & 42.51 & 60.62 & 56.64 \\
            Combiner        & 38.26 & 63.98 & 81.47 & 42.80 & 55.94 & 56.49 \\
            LARA            & 37.10 & 64.62 & 80.82 & 38.99 & 68.96 & 58.10 \\
            Nystr\"omformer & 38.46 & \textbf{65.28} & 80.44 & 39.71 & 68.98 & 58.57 \\
            Local           & 38.46 & 63.70 & 80.71 & 42.25 & 68.46 & 58.72 \\
            Long-short      & 38.56 & 63.46 & \textbf{81.73} & 40.54 & \textbf{71.28} & 59.11 \\
            \midrule
            \model          & \textbf{38.61} & 64.31 & 80.21 & \textbf{43.24} & 70.90 & \textbf{59.45} \\
            \bottomrule[.1em]
\end{tabular}}  
\end{table}

\section{Connections to Other Attention Mechanisms}\label{app:sec:connections}
\subsection{RFA, Softmax Attention, and EVA}
As mentioned in our main text, one of the main contributions of this work is to develop a more general framework that bridges RFA and conventional softmax attention. To see how \model (\cref{eqn:eva}) achieves this goal formally, note that if either $|E| = M$ or $C = M$, \model would be equivalent to standard softmax attention; while if we set $|E| = 0$ and $C = 1$, \model would recover vanilla RFA.

\subsection{Connections to LARA}
Notably, \model and LARA \citep{lara} are two efficient attention mechanisms that are both built upon the self-normalized importance sampling (SNIS) formulation of RFAs. LARA \citep{lara} puts the main focus on the proposal distribution used in SNIS and tries to design importance sampling proposals that are closer to the true underlying distribution. The proposed usage of multiple proposals further improves the estimation quality of SNIS and achieves strong empirical performance while still keeping linear complexity.

In contrast to LARA, in this work we do not focus on the design choice of proposals used in importance sampling but aim to generalize the SNIS formulation further via control variates. As demonstrated in \cref{ssec:rfa-and-softmax}, our theory clearly delineates how the gap between such SNIS estimation and softmax attention can be closed by manipulating control variates. Since LARA and RFA are both SNIS estimators (their main difference lies in the choice of proposal distributions), our generalization also applies to LARA. To summarize, compared with LARA, \model is a more general framework and improves conventional RFA from an orthogonal perspective.

\subsection{Connections to Clustered Attention}
Clustered attention \citep{vyas2020fast} is an efficient attention mechanism that first clusters the set of queries into multiple groups, computes the mean centroid of each group, and then performs attention between query centroids and original key-value pairs. This framework is fast and effective and enjoys well-bounded approximation error.

Clustered attention and \model share some similarities in two aspects. First, both of them adopt the partitioning technique to reduce the computational complexity while remaining effective; and secondly, both observe that the efficient attention mechanism can be improved by refining the approximation over specific elements. For instance, clustered attention can be improved \citep{vyas2020fast} by selecting top-$k$ key-value pairs that are most relevant to each centroid and then refining the approximation by recomputing attention weights over these keys using original queries; while \model notices that we can directly employ the optimal control variate coefficient for a subset of key-value pairs ($m \in E$) while still remaining efficient, which yields a more accurate approximation. 

Nevertheless, our main technical contribution is to develop a control variate formulation in the context of RFA and demonstrate that how RFA can be further improved locally. On the other hand, while clustered attention \citep{vyas2020fast} clusters queries, \model partitions key-value pairs. This property makes \model more amenable to the case of autoregressive language modeling since we do not impose clustering structures over the query set, and thus the causal relation among queries can be well maintained.

\subsection{Connections to Combiner}
Combiner \citep{ren2021combiner} is a recently proposed attention mechanism that also partitions the sequence into chunks combined with local attention. The key difference between \model and Combiner is the motivation, where Combiner introduces a structured factorization over the attention probability distribution, while our approach is built from the control variate perspective. 

\subsection{Connections to Scatterbrain}
In this section, we show that Scatterbrain \citep{chen2021scatterbrain} can be cast as a special case of our framework \model, although they are proposed based on quite different motivations. 
\paragraph{A Brief Review of Scatterbrain.}
Scatterbrain \citep{chen2021scatterbrain} notes that sparse attention and RFA can approximate sharp and flat regions of the softmax attention matrix well, respectively. Based on this insight, Scatterbrain is proposed to first compute a Performer approximation to softmax attention and then cancel out the approximation error on critical regions via a sparse mechanism.

Specifically, Scatterbrain \citep{chen2021scatterbrain} defines a sparse matrix $\mbS \in \R^{N \times M}$) so that for each $(n,m) \in \mbS$ that indexes a non-zero entry. For notational simplicity, we also denote $\operatorname{Supp}(\mbS) = \{(i,j) |S_{ij} \neq 0\}$ and $\operatorname{Supp}_n(\mbS) = \{m |S_{nm} \neq 0\}$. With random features $\mbphi(\cdot,\cdot)$ defined in \cref{app:sec:rfa}, we let 
\begin{align*}
    S_{nm} = \exp\left(\mbq_n^\top \mbk_m\right) - \mbphi(\mbq_n, \mbomega)^\top \mbphi(\mbk_m, \mbomega).
\end{align*}
We then add it back to the approximate output:
\begin{align*}
    y'_{n} &= \sum_{m=1}^M\mbphi(\mbq_n, \mbomega)^\top \mbphi(\mbk_m, \mbomega)\mbv_m + \mbS\mbV \\
    &= \sum_{m=1}^M\mbphi(\mbq_n, \mbomega)^\top \mbphi(\mbk_m, \mbomega)\mbv_m + \sum_{m' \in \operatorname{Supp}_n(\mbS)} S_{nm'}\mbv_{m'} \\
    &= \sum_{m \notin \operatorname{Supp}_n(\mbS)} \mbphi(\mbq_n, \mbomega)^\top \mbphi(\mbk_m, \mbomega)\mbv_m + \sum_{m' \in \operatorname{Supp}_n(\mbS)} \exp\left(\mbq_n^\top \mbk_{m'}\right)\mbv_{m'}.\numberthis\label{app:eqn:scatterbrain:1}
\end{align*}
The sparse mechanism can be thought of as modeling the error due to RFA and eliminating it on the support of $\mbS$. After the correction step, Scatterbrain further adds a post-hoc normalization step to obtain a normalized attention output:
\begin{align*}
    y_{n} &= \frac{\sum_{m \notin \operatorname{Supp}_n(\mbS)} \mbphi(\mbq_n, \mbomega)^\top \mbphi(\mbk_m, \mbomega)\mbv_m + \sum_{m' \in \operatorname{Supp}_n(\mbS)} \exp\left(\mbq_n^\top \mbk_{m'}\right)\mbv_{m'}}{\sum_{m \notin \operatorname{Supp}_n(\mbS)} \mbphi(\mbq_n, \mbomega)^\top \mbphi(\mbk_m, \mbomega) + \sum_{m' \in \operatorname{Supp}_n(\mbS)} \exp\left(\mbq_n^\top \mbk_{m'}\right)}.\numberthis\label{app:eqn:normalized_scatterbrain}
\end{align*}
Intuitively, Scatterbrain \citep{chen2021scatterbrain} produces accurate approximation in the support of the sparse matrix and remains the random feature approximation outside the support.

\paragraph{Scatterbrain is a Special Case of EVA.}
For notational convenience, we denote $E \coloneqq \operatorname{Supp}_n(\mbS)$. According to \cref{prop:optimal-beta}, suppose we employ optimal coefficients $\widehat{\mbbeta}_m$ for all entries in $\operatorname{Supp}_n(\mbS)$, and use the same coefficient $\widehat{\mbbeta}$ for all the remaining entries (in other words, we let $C = 1$ and the whole index set is only partitioned into two subsets $\{E, [M] \setminus E\}$). Then we have
\begin{align*}
\widetilde{g}_m(\mbomega) = 
\begin{cases}
g_m(\mbomega) - \widehat{\mbbeta}_m h_m(\mbomega) + \widehat{\mbbeta}_m \frac{\exp(\mbq_n^\top \mbk_m)}{Z} = \frac{\exp(\mbq_n^\top \mbk_m)\mbv_m}{Z}, &\text{ if } m \in E,\\
g_m(\mbomega) - \widehat{\mbbeta} h_m(\mbomega) + \widehat{\mbbeta} \frac{\exp(\mbq_n^\top \mbk_m)}{Z}, &\text{ if } m \notin E.
\end{cases}
\end{align*}
And the resulting estimator overall becomes
\begin{align*}
    \widetilde{g}(\mbomega) 
    &= \sum_{m=1}^M \widetilde{g}_m(\mbomega)\\
    &= \sum_{m \in E} \widetilde{g}_m(\mbomega) + \sum_{m \notin E} \widetilde{g}_m(\mbomega)\\
    &= \sum_{m \in E} \frac{\exp(\mbq_n^\top \mbk_m)\mbv_m}{Z} + \sum_{m \notin E} \left(g_m(\mbomega) - \widehat{\mbbeta} h_m(\mbomega) + \widehat{\mbbeta}\frac{\exp(\mbq_n^\top \mbk_m)}{Z}\right)\\
    &= \sum_{m \in E} \frac{\exp(\mbq_n^\top \mbk_m)\mbv_m}{Z} + \sum_{m \notin E} \left(g_m(\mbomega) - \widehat{\mbbeta} h_m(\mbomega)\right) + \widehat{\mbbeta}\sum_{m \notin E}\frac{\exp(\mbq_n^\top \mbk_m)}{Z}\\
    &= \sum_{m \in E} \frac{\exp(\mbq_n^\top \mbk_m)\mbv_m}{Z} + \sum_{m \notin E} \left(g_m(\mbomega) - \widehat{\mbbeta} h_m(\mbomega)\right) + \widehat{\mbbeta}\left(1 - \sum_{m \in E} \frac{\exp(\mbq_n^\top \mbk_m)}{Z}\right).
\end{align*}
Scatterbrain \citep{chen2021scatterbrain} can be a special case of this estimation algorithm if we set the proposal distribution to $q(\omega) = \mathcal{N}(\omega;0,\mathbf{I})$, and estimate the normalizing constant as follows.
\begin{align*}
    Z &= \E[\omega \sim q(\omega)]{\frac{\mathcal{N}(\omega;0,\mathbf{I}) \left(\sum_{m \in E}\xi(\mbq_n,\omega)^\top  \xi(\mbk_{m}, \omega) + \sum_{m \notin E}\xi(\mbq_n,\omega)^\top  \xi(\mbk_{m}, \omega)\right)}{q(\omega)}} \\
    &= \sum_{m \in E} \exp(\mbq_n^\top \mbk_m) + \E[\omega \sim q(\omega)]{\frac{\mathcal{N}(\omega;0,\mathbf{I}) \sum_{m \notin E}\xi(\mbq_n,\omega)^\top  \xi(\mbk_{m}, \omega)}{q(\omega)}}\\
    &\approx \sum_{m \in E} \exp(\mbq_n^\top \mbk_m) +  \frac{1}{S}\sum_{s=1}^S \frac{\mathcal{N}(\omega;0,\mathbf{I}) \sum_{m \notin E}\xi(\mbq_n,\omega)^\top  \xi(\mbk_{m}, \omega)}{q(\omega_s)}\\
    &= \sum_{m \in E} \exp(\mbq_n^\top \mbk_m) +  \frac{1}{S}\sum_{s=1}^S \sum_{m \notin E}\xi(\mbq_n,\omega)^\top  \xi(\mbk_{m}, \omega)\\
    &= \sum_{m \in E} \exp(\mbq_n^\top \mbk_m) +  \sum_{m \notin E}\mbphi(\mbq_n, \mbomega)^\top \mbphi(\mbk_m, \mbomega)\\ 
    &\coloneqq \sum_{m \in E} \exp(\mbq_n^\top \mbk_m) +  \sum_{m \notin E}\widetilde{h}_m(\mbomega),
\end{align*}
where we define $\widetilde{h}_m(\mbomega) = Zh_m(\mbomega)$, as in this case
\begin{align*}
g(\mbomega) &= \frac{1}{S}\sum_{s=1}^S\frac{p_n(\omega_s)}{q(\omega_s)}f(\omega_s) = \frac{1}{S}\sum_{s=1}^S\frac{1}{Z}\sum_{m=1}^M\xi(\mbq_n, \omega_s)\xi(\mbk_m, \omega_s)\mbv_m,  \\
h(\mbomega) &= \frac{1}{S}\sum_{s=1}^S\frac{p_n(\omega_s)}{q(\omega_s)} = \frac{1}{S}\sum_{s=1}^S\frac{1}{Z}\sum_{m=1}^M\xi(\mbq_n, \omega_s)\xi(\mbk_m, \omega_s).
\end{align*}
With these specifications, we obtain
\begin{align*}
    \widetilde{g}(\mbomega) 
    &= \sum_{m \in E} \frac{\exp(\mbq_n^\top \mbk_m)\mbv_m}{Z} + \sum_{m \notin E} \left(g_m(\mbomega) - \widehat{\mbbeta} h_m(\mbomega)\right) + \widehat{\mbbeta}\left(1 - \sum_{m \in E} \frac{\exp(\mbq_n^\top \mbk_m)}{Z}\right)\\
    &= \sum_{m \in E} \frac{\exp(\mbq_n^\top \mbk_m)\mbv_m}{Z} + \sum_{m \notin E} \left(g_m(\mbomega) - \widehat{\mbbeta} h_m(\mbomega)\right) +  \widehat{\mbbeta}\frac{Z - \sum_{m \in E} \exp(\mbq_n^\top \mbk_m)}{Z}\\
    &\approx \sum_{m \in E} \frac{\exp(\mbq_n^\top \mbk_m)\mbv_m}{Z} + \sum_{m \notin E} \left(g_m(\mbomega) - \widehat{\mbbeta} h_m(\mbomega)\right) +  \widehat{\mbbeta}\frac{\sum_{m \notin E}\widetilde{h}_m(\mbomega)}{Z}\\
    &= \sum_{m \in E} \frac{\exp(\mbq_n^\top \mbk_m)\mbv_m}{Z} + \sum_{m \notin E} \left(g_m(\mbomega) - \widehat{\mbbeta} h_m(\mbomega)\right) +  \widehat{\mbbeta}\sum_{m \notin E}{h}_m(\mbomega)\\
    &= \frac{\sum_{m \in E} \exp(\mbq_n^\top \mbk_m)\mbv_m}{Z} +  \sum_{m \notin E}g_m(\mbomega)\\
    &= \frac{\sum_{m \in E} \exp(\mbq_n^\top \mbk_m)\mbv_m}{Z} +  \sum_{m \notin E}\frac{\frac{1}{S}\sum_{s=1}^S\xi(\mbq_n, \omega_s)\xi(\mbk_m, \omega_s)\mbv_m}{Z}\\ 
    &= \frac{\sum_{m \in E} \exp(\mbq_n^\top \mbk_m)\mbv_m}{Z} +  \sum_{m \notin E}\frac{\mbphi(\mbq_n, \mbomega)^\top \mbphi(\mbk_m, \mbomega)\mbv_m}{Z}\\ 
    &\approx \frac{\sum_{m \notin E} \mbphi(\mbq_n, \mbomega)^\top \mbphi(\mbk_m, \mbomega)\mbv_m + \sum_{m' \in E} \exp\left(\mbq_n^\top \mbk_{m'}\right)\mbv_{m'}}{\sum_{m \notin E} \mbphi(\mbq_n, \mbomega)^\top \mbphi(\mbk_m, \mbomega) + \sum_{m' \in E} \exp\left(\mbq_n^\top \mbk_{m'}\right)} \numberthis\label{eqn:scatterbrain}
\end{align*}
which is equivalent to Scatterbrain (\cref{app:eqn:normalized_scatterbrain}). Note that this equivalence would hold irrespective of the choice of shared coefficients $\widehat{\mbbeta}$, which possibly indicates that the formulation of Scatterbrain limits the potential benefit of optimizing control variates under our framework.

\end{document}